\documentclass[most, 11pt, a4paper]{delta-beta}
\usepackage{dblfloatfix}
\usepackage{ulem}
\usepackage{caption}
\usepackage{dramatist}
\usepackage{xspace}
\usepackage{pifont} %
\usepackage{multirow}
\usepackage{tcolorbox}
\usepackage{xltabular}
\usepackage{longtable}
\usepackage{hyperref}
\usepackage{amsfonts}
\usepackage{amsmath}
\usepackage{amssymb}
\usepackage{lineno}
\usepackage{multirow}
\usepackage{adjustbox}
\usepackage{float}
\usepackage{bm}

\usepackage[bottom]{footmisc}

\usepackage{CJKutf8}
\usepackage{subfigure}
\usepackage{setspace}
\usepackage{algorithm}
\usepackage{algpseudocode}  
\usepackage[dvipsnames]{xcolor}
\usepackage{dsfont}
\usepackage{array} %
\usepackage{tabularx} %
\usepackage{xcolor} %
\usepackage{tabularx}
\usepackage{booktabs}

\usepackage{lipsum}  %
\usepackage{multicol} %

\usepackage{natbib} 

\usepackage{algorithm}
\usepackage{algpseudocode}
\usepackage{enumitem}
\usepackage{amsthm}

\newtheorem{corollary}{Corollary}

\algrenewcommand\algorithmicrequire{\textbf{Input:}}
\algrenewcommand\algorithmicensure{\textbf{Output:}}

\usepackage{array}
\newcolumntype{L}[1]{>{\raggedright\arraybackslash}p{#1}}
\newcolumntype{C}[1]{>{\centering\arraybackslash}p{#1}}
\newcolumntype{R}[1]{>{\raggedleft\arraybackslash}p{#1}}

\def\obsarch{1,942}
\def\trainarch{170}

\makeatletter
\def\@BTrule[#1]{%
  \ifx\longtable\undefined
    \let\@BTswitch\@BTnormal
  \else\ifx\hline\LT@hline
    \nobreak
    \let\@BTswitch\@BLTrule
  \else
     \let\@BTswitch\@BTnormal
  \fi\fi
  \global\@thisrulewidth=#1\relax
  \ifnum\@thisruleclass=\tw@\vskip\@aboverulesep\else
  \ifnum\@lastruleclass=\z@\vskip\@aboverulesep\else
  \ifnum\@lastruleclass=\@ne\vskip\doublerulesep\fi\fi\fi
  \@BTswitch}
\makeatother

\addto\extrasenglish{

}

\addto\extrasUKenglish{

}

 {\begin{list}{}%
         {\setlength{\leftmargin}{#1}}%
         \item[]%
 }
 {\end{list}}

\reportnumber{001} %

\renewcommand{\today}{}

\usepackage{amsthm}
\newtheorem{theorem}{Theorem}[section]  

\usepackage{xcolor}
\usepackage{tcolorbox}

\newtcolorbox{scenariobox}[2][]{
  colback=#2!5!white,
  colframe=#2!75!black,
  fonttitle=\bfseries,
  title=#1,
  rounded corners,
  arc=3mm,
  boxrule=1.2pt,
  left=6pt,
  right=6pt,
  top=6pt,
  bottom=6pt,
  enhanced,
  drop shadow={opacity=0.3}
}

\newtcolorbox{keybox}[1][]{
  colback=yellow!10!white,
  colframe=orange!75!black,
  fonttitle=\bfseries\large,
  title={#1},  
  rounded corners,
  arc=2mm,
  boxrule=1.5pt,
  left=8pt,
  right=8pt,
  top=8pt,
  bottom=8pt,
  enhanced,
  drop shadow={opacity=0.4},
  borderline west={3pt}{0pt}{orange!75!black}
}

\title{Hardware Co-Design Scaling Laws via Roofline Modelling for On-Device LLMs}

\author[1,2,3,4]{Luoyang Sun}
\author[1,2,4]{Jiwen Jiang}
\author[4]{Yifeng Ding}
\author[4]{Fengfa Li}
\author[5]{Yan Song}
\author[2,3]{Haifeng Zhang}
\author[1]{Jian Ying}
\author[4,$\dagger$]{Lei Ren}
\author[4]{Kun Zhan}
\author[4,$\dagger$]{Wei Chen}
\author[4]{Yan Xie}
\author[6,$\ddagger$]{Cheng Deng}
\affil[1]{AI Lab, The Yangtze River Delta}
\affil[2]{Institution of Automation, Chinese Academy of Sciences}
\affil[3]{University of Chinese Academy of Sciences}
\affil[4]{Li Auto}
\affil[5]{University College London}
\affil[6]{The University of Edinburgh}

\renewcommand{\phi}{\varphi}

\renewcommand{\leq}{\leqslant}
\renewcommand{\geq}{\geqslant}

\renewcommand{\epsilon}{\varepsilon}
\renewcommand{\imath}{\mathrm{i}}

\newlength{\restsubwidth}
\newlength{\restsubheight}
\newlength{\restsubmoreheight}
\newcommand{\rest}[2]{%
        \settowidth{\restsubwidth}{\ensuremath{#2}}
        \settoheight{\restsubheight}{\ensuremath{{}_{#2}}}
        \ensuremath{{#1\hskip 0.5pt}_{\vrule\kern2pt\parbox[b][%
        4pt][b]{\the\restsubwidth}{%
                        \ensuremath{{}_{#2}}}}}
        }

\begin{abstract}

Vision-Language-Action Models (VLAs) have emerged as a key paradigm of Physical AI and are increasingly deployed in autonomous vehicles, robots, and smart spaces. In these resource-constrained on-device settings, selecting an appropriate large language model (LLM) backbone is a critical challenge: models must balance accuracy with strict inference latency and hardware efficiency constraints.
This makes hardware-software co-design a game-changing requirement for on-device LLM deployment, where each hardware platform demands a tailored architectural solution.
We propose a hardware co-design law that jointly captures model accuracy and inference performance. Specifically, we model training loss as an explicit function of architectural hyperparameters and characterise inference latency via roofline modelling. We empirically evaluate \obsarch{} candidate architectures on NVIDIA Jetson Orin, training \trainarch{} selected models for 10B tokens each to fit a scaling law relating architecture to training loss. By coupling this scaling law with latency modelling, we establish a direct accuracy-latency correspondence and identify the Pareto frontier for hardware co-designed LLMs. We further formulate architecture search as a joint optimisation over precision and performance, deriving feasible design regions under industrial hardware and application budgets.
Our approach reduces architecture selection from months to days. At the same latency as Qwen2.5-0.5B on the target hardware, our co-designed architecture achieves \textbf{19.42\%} lower perplexity on WikiText-2. To our knowledge, this is the first principled and operational framework for hardware co-design scaling laws in on-device LLM deployment. We will make the code and related checkpoints publicly available.

\keywords{Neural Architecture Search, Hardware-Software co-Design, On-Device LLM}
\end{abstract}

\begin{document}
\maketitle

\begin{figure}[h]
    \centering
    \includegraphics[width=0.95\linewidth]{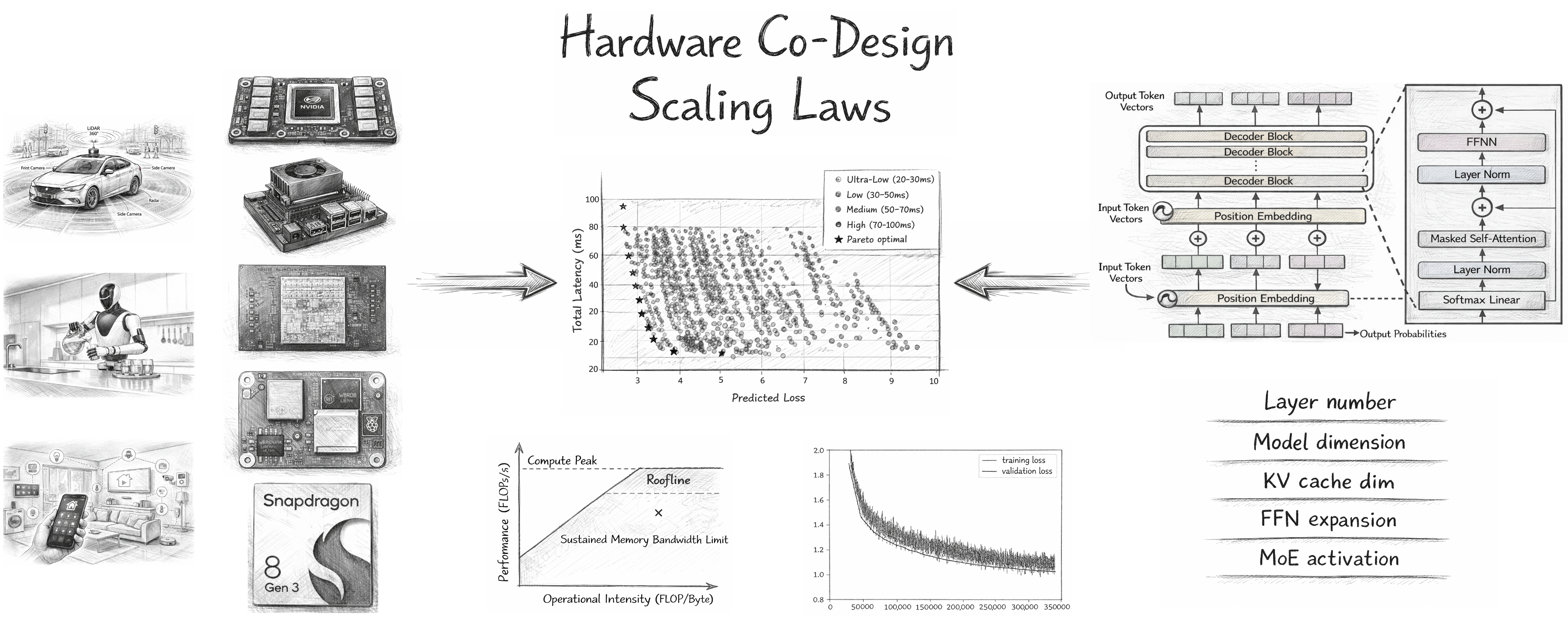}
    \caption{Hardware co-design scaling law for on-device LLMs. Architectural choices and hardware platforms jointly shape the loss-latency Pareto frontier, revealing Pareto-optimal configurations under system constraints.}
    \label{fig:overview}
\end{figure}

\renewcommand{\thefootnote}{\fnsymbol{footnote}}
    \footnotetext[1]{Version: v1 (major update on Feb 10, 2026).}
    \footnotetext[2]{Project Leader.}
    \footnotetext[3]{Correspondence to \href{mailto:davendw49@gmail.com}{Cheng Deng}.}
    
\renewcommand{\thefootnote}{\arabic{footnote}}

\newpage
\setcounter{tocdepth}{2}
\tableofcontents
\newpage

\section{Introduction}

Large language models are increasingly deployed in embodied AI systems such as autonomous vehicles and mobile robots, where they serve as high-level planners within Vision–Language-Action frameworks~\cite{kim2024openvla,zitkovich2023rt,sapkota2025vision}. However, on-device platforms face strict constraints on memory, bandwidth, power, and latency that fundamentally reshape model design~\cite{zhou2025hierarchical,yang2023llm4drive}. Architectures optimized for cloud GPUs often become infeasible on the edge: high-accuracy models may violate latency budgets, while latency-optimized pipelines can degrade accuracy. This tension motivates hardware–software co-design, where architectural choices are explicitly guided by hardware capabilities and deployment constraints~\cite{guo2025survey}.

The fundamental challenge stems from the irregular compute–memory profile of transformers~\cite{mobilellm,deng2025plm}: attention is bandwidth-bound, feedforward layers are compute-bound, and KV-cache stresses on-chip memory. Despite high theoretical throughput from AI-SoCs, LLM inference rarely reaches peak utilization. Instead, performance is dictated by arithmetic intensity, on-chip locality, and workload patterns like KV-cache footprint and MoE routing. Architectural modification can shift operations across regimes in the hardware Roofline model~\cite{williams2008roofline,llmviewer}, shown in~\autoref{fig:roofline_illustrate}. A representative example is scaling model depth and width in Transformer-based LLMs. Increasing depth results in linear growth in both computation and memory traffic, as each additional layer introduces a fixed amount of parameter reads and arithmetic operations. In contrast, increasing model width leads to quadratic growth in parameter size and memory I/O, since both attention and feed-forward layers scale with the square of the hidden dimension. Under batch-1 inference on edge devices, where weight reuse across tokens is limited and on-chip cache capacity is insufficient to hold model parameters, inference latency is dominated by repeated weight loads from off-chip memory. As a consequence, arithmetic intensity remains low and does not scale proportionally with model width, pushing execution into a memory bandwidth–limited regime in the roofline model~\cite{bian2025scaling1,bian2025scaling2}. This mismatch between scaling behavior and hardware characteristics motivates hardware-aware architectural design beyond naive depth and width scaling.

\begin{figure}[htbp]
    \centering
    \subfigure[Roofline Model]{
        \centering
        \includegraphics[height=3.2cm]{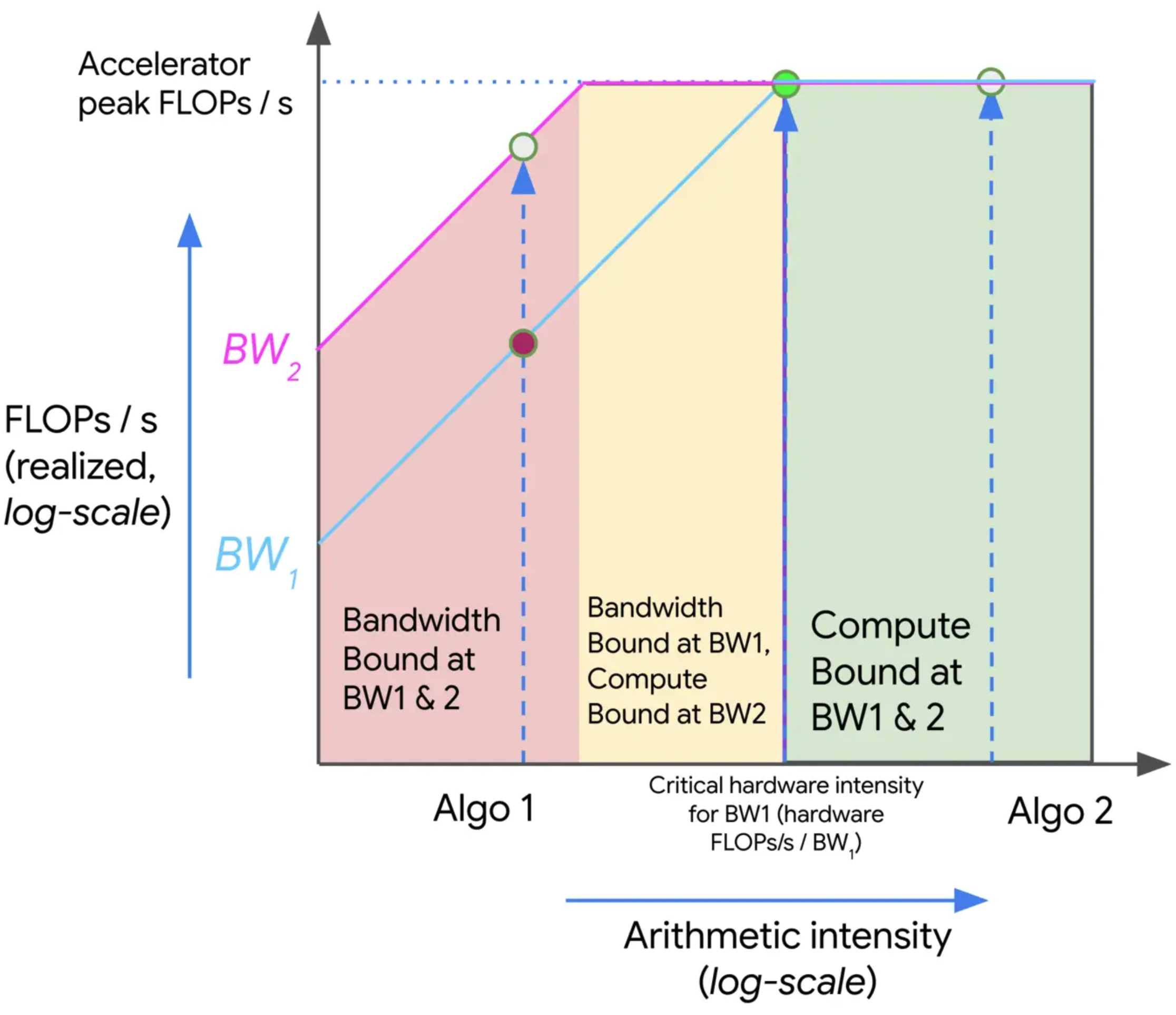}
        \label{fig:roofline_illustrate}
    }
    \subfigure[Neural Architecture Search]{
        \centering
        \includegraphics[height=3.2cm]{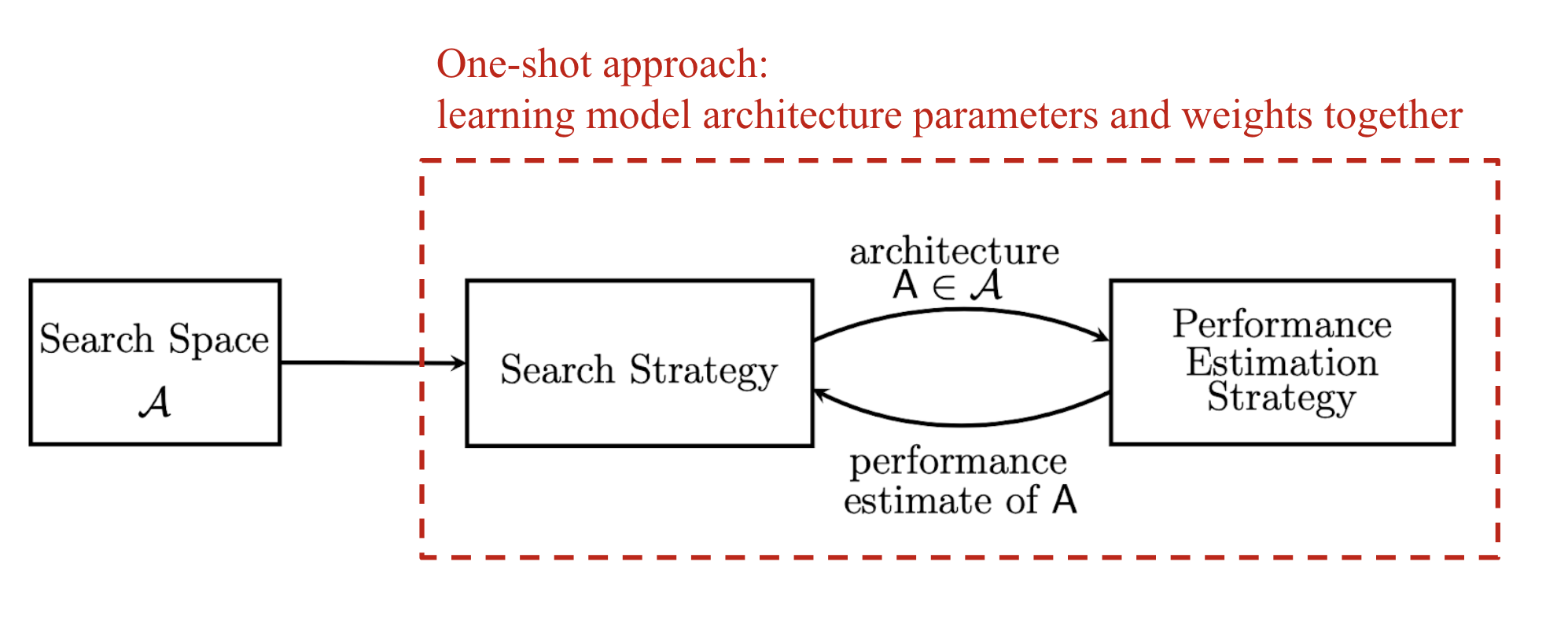}
        \label{fig:nas_illustrate}
    }
    \subfigure[Pareto Frontier]{
        \centering
        \includegraphics[height=3.2cm]{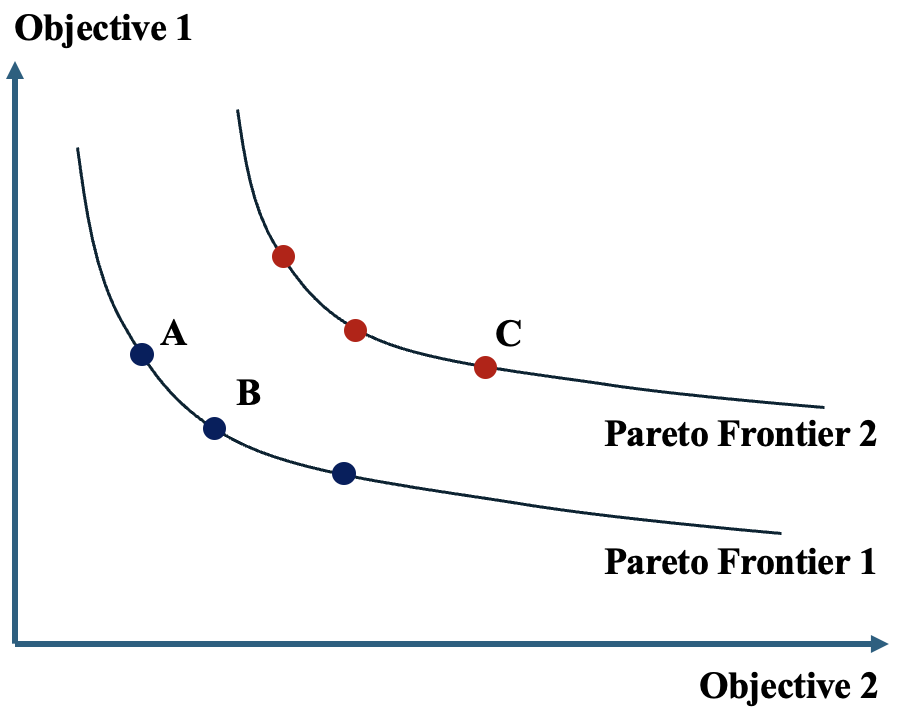}
        \label{fig:pareto_illustration}
    }
    \caption{Hardware co-design strategy preliminaries. (a) Roofline model~\cite{williams2008roofline} comparing achieved performance against theoretical hardware limits. (b) One-shot NAS~\cite{elsken2019neural} framework jointly optimizing architecture and weights. (c) Pareto frontier visualizing accuracy-efficiency trade-offs~\cite{cheng2018searching}.}
    \label{fig:intro_pre}
\end{figure}

In order to find an optimal model, Neural architecture search (NAS)~\cite{zoph2016neural} usually is an appropriate way, shown in~\autoref{fig:nas_illustrate}. NAS has traditionally focused on optimizing a single objective, such as validation loss or accuracy, often under loosely defined computational budgets~\cite{elsken2019neural}. While effective in unconstrained settings, these approaches are ill-suited to edge and on-device platforms, where latency–accuracy trade-offs are unavoidable. As shown in~\autoref{tab:llm_tradeoff}, LLM inference system design involves several fundamental trade-offs~\cite{cheng2025lmcache,yang2025kvlink,kurtic2025give,egiazarian2025bridging,yin2025specpipeacceleratingpipelineparallelismbased,xu2025characterizing,deng2025plm}. This paper focuses on the interplay between model loss and inference latency in hardware co-designed LLMs (bold in~\autoref{tab:llm_tradeoff}), which is commonly observed in practice but has not been systematically characterized. This trade-off motivates a Pareto-based approach to architecture search~\cite{cheng2018searching}, visualized as~\autoref{fig:pareto_illustration}. Instead of a single optimal model, Pareto optimization identifies a frontier of non-dominated architectures balancing accuracy and latency. This allows designers to directly select models fitting specific deployment constraints, avoiding exhaustive enumeration~\cite{mobilellm,tang2024rethinking}.

\begin{table}[h]
\centering
\caption{Trade-off Analysis for LLM Inference System Design}
\label{tab:llm_tradeoff}
\small
\resizebox{\linewidth}{!}{%
\begin{tabular}{p{0.30\linewidth} p{0.30\linewidth} p{0.33\linewidth}}
\toprule
\textbf{Scenario} & \textbf{Trade-off Targets} & \textbf{Pareto Trade-off Factors} \\
\midrule
Memory allocation optimization 
& Throughput vs. Single-Query Latency 
& Batch size selection and KV-cache reuse strategies \\ \midrule

Quantization strategy selection 
& Memory Cost vs. Model Precision 
& INT8 / INT4 quantization levels and accuracy verification methods \\ \midrule

Distributed inference system 
& Network Communication Cost vs. Compute Parallelism 
& Model parallelism granularity and pipeline depth \\ \midrule

\textbf{Hardware co-designed LLM}
& \textbf{Model Loss vs. Inference Latency}
& \textbf{Equivalent parameter count and inference time over hardwares} \\
\bottomrule
\end{tabular}
}
\end{table}

We propose a hardware-aware modeling framework (shown in~\autoref{fig:overview}) for on-device LLMs that jointly captures accuracy and inference performance. We model loss via architectural hyperparameters and predict latency using roofline analysis, providing a unified view of the accuracy–latency trade-off. We validate the framework on NVIDIA Jetson Orin by benchmarking thousands of candidate architectures and training a subset to fit a parameter–loss scaling law. Combining this law with hardware-level latency modeling yields the Pareto frontier for the target system. We extend this into a theoretical joint optimization formulation over precision and performance, deriving feasible design regions under practical constraints. 

Our main contribution can be listed as follows:

\begin{itemize}[leftmargin=1.2em]
    \item We develop a hardware co-design law that combines loss scaling laws with roofline-based latency modeling, enabling an explicit Pareto characterization of accuracy–latency trade-offs under fixed hardware constraints for on-device LLMs. To the best of our knowledge, this is the first practical and operational hardware co-design scaling law for on-device LLMs.

    \item We benchmark approximately \obsarch{} LLM architectures on NVIDIA Jetson Orin to identify hardware-aligned architectural patterns, select \trainarch{} representative models, and train each for 10B tokens to empirically fit the loss scaling law and validate the resulting accuracy–latency Pareto structure.

    \item We extend the empirical framework into a principled theoretical formulation that casts architecture search as a joint optimization problem over precision and performance, deriving feasible design regions under industrial hardware and application budgets.

    \item Since this approach reduces architecture selection time from months to only few days, we will release the full methodology, codebase, trained models, and detailed evaluation protocols to support reproducibility and support the development of hardware co-design community. 
\end{itemize}

The paper is organized as follows. \autoref{sec:related_work} reviews related works. In~\autoref{sec:formulation}, we formulate the hardware co-design law under on-device constraints. In~\autoref{sec:pareto_nas}, we present Pareto-optimal architecture discovery via roofline modeling, detailing how optimal model architectures are identified for a given hardware platform. As for~\autoref{sec:theory}, we extend the empirical framework to a theoretical formulation, casting architecture search as a joint optimization problem over precision and performance. Finally, we discuss practical usage and application scenarios of the proposed hardware co-design law. 

\section{Related Work}
\label{sec:related_work}

\subsection{Efficient LLM Architectures}

Recent LLM designs prioritize inference efficiency beyond parameter scaling. Key directions include: (1) \textit{sparse activation} via MoE routing~\cite{guo2025deepseek,agarwal2025gpt}, (2) \textit{KV-cache reduction} through multi-head latent attention (MLA)~\cite{deng2025plm} or sliding-window attention~\cite{team2025gemma}, (3) \textit{sub-quadratic attention} using linear-complexity mechanisms~\cite{team2025kimi,team2025minimax}, and (4) \textit{hybrid architectures} combining SSM with attention~\cite{blakeman2025nvidia}. Gated attention variants~\cite{qiu2025gated,sun2025gta} further enable token-dependent compute modulation. These innovations fundamentally shift inference bottlenecks from compute-bound to memory- or routing-sensitive regimes.

\subsection{On-Device LLM Deployment}

Deploying LLMs on edge devices requires co-optimization across model, compression, and system layers. SLMs such as Qwen~\cite{yang2024qwen2_5}, MiniCPM~\cite{hu2024minicpm}, and SmolLM~\cite{bakouch2025smollm3} achieve strong performance through data-efficient training. Quantization methods (AWQ~\cite{lin2024awq}, GPTQ~\cite{frantar2022gptq}) and inference engines (vLLM~\cite{kwon2023efficient}, MLC-LLM~\cite{mlcllm}, PowerInfer~\cite{song2024powerinfer}) enable efficient execution on heterogeneous hardware. Architecturally, deeper designs with shared embeddings~\cite{mobilellm} and sparsity-aware mechanisms~\cite{deng2025plm} demonstrate favorable accuracy–efficiency trade-offs for resource-constrained deployment.

\subsection{Hardware-Aware Architecture Optimization}

Neural Architecture Search (NAS)~\cite{zoph2018learning,liu2019darts} automates architecture design but faces challenges in LLM contexts: prohibitive search costs, difficulty incorporating latency/memory constraints, and limited interpretability. Profiling tools like LLM-Viewer~\cite{llmviewer} and LLMCompass~\cite{llmcompass} provide hardware-aware analysis but lack integration with architecture search. Studies on depth–width trade-offs~\cite{tay2021scale,bian2025scaling1} yield inconsistent conclusions, often neglecting deployment constraints.

\vspace{-0.2cm}
\paragraph{Positioning.}
This work bridges LLM architecture design with hardware-aware performance modeling. We extend roofline analysis to explicitly model how architectural choices, MoE sparsity, attention variants, KV-cache strategies, map to compute- or bandwidth-limited regimes on edge devices, enabling architecture search guided by hardware co-design law rather than post-hoc benchmarking.
\section{Formulating Hardware Co-Design Law for on-Device LLM}
\label{sec:formulation}

Classical scaling laws~\citep{kaplan2020scaling, hoffmann2022chinchilla} characterize the relationship between model size, data size, and training compute under fixed training budgets. In contrast, this work focuses on the deployment regime, where the objective is to identify the optimal architecture $\bm{\theta}^*$ under a fixed inference latency budget $T_{\text{lat}}$ and precision constraints. In this section, we formalize this problem and introduce a hardware co-design formulation for on-device LLMs.

\subsection{Implicit Optimization Objective of Hardware Co-Design Law}
\textbf{Optimization Objective.} We seek to minimize validation loss subject to latency and memory constraints:
\begin{equation}
\min_{\bm{\theta} \in \Theta} \; \mathcal{L}(\bm{\theta}) \quad \text{s.t.} \quad T(\bm{\theta}; H, W) \leq T_{\text{lat}}, \quad M(\bm{\theta}; W) \leq M_{\text{budget}}
\label{eq:problem}
\end{equation}
where $\bm{\theta} = (l, d, d_m, r, \rho)$ denotes the model architecture. Specifically, $l$ is the number of transformer layers (depth), $d$ is the model width (hidden dimension), and $d_m = d_h \times n_{kv}$ is the key–value cache dimension, determined by the per-head dimension $d_h$ and the number of key–value heads $n_{kv}$ in grouped-query attention (GQA), which directly governs KV-cache memory footprint and bandwidth consumption during autoregressive decoding. The FFN expansion ratio $r$ controls the size of the intermediate feed-forward layer, with intermediate dimension $r \cdot d$, and therefore dominates per-token compute. For mixture-of-experts (MoE) models, $\rho = K/E \in (0,1]$ denotes the expert activation rate, where $K$ experts are selected from a pool of $E$ experts per token. In this setting, $r$ represents the total expansion ratio across all activated experts: if each expert has per-expert expansion $r_{\text{single}}$, then $r = K \cdot r_{\text{single}}$, ensuring that FFN compute remains comparable across different sparsity levels.

The latency surrogate $T(\bm{\theta}; H, W)$ models the end-to-end latency under context encoding and autoregressive generation, depending on both architectural and hardware characteristics. Hardware parameters $H = (\pi_H, \beta_H)$ include the peak compute throughput $\pi_H$ (FLOPS) and sustained memory bandwidth $\beta_H$ (Bytes/s). Under roofline analysis, $\pi_H$ and $\beta_H$ jointly determine whether inference is compute- or bandwidth-bound. The workload configuration $W = (B, S_{\text{in}}, S_{\text{out}})$ specifies the batch size $B$, input sequence length $S_{\text{in}}$, and output sequence length $S_{\text{out}}$, which collectively affect attention complexity and KV-cache access patterns during decoding. The memory surrogate $M(\bm{\theta}; W)$ estimates memory consumption based on model architecture and workload configuration.

\textbf{Problem Tractability.}  
Due to the high dimensionality of architectural hyperparameters in modern LLMs, the search space induced by~\autoref{eq:problem} is prohibitively large to enumerate. As a result, directly solving this constrained optimization problem is computationally infeasible in realistic deployment settings.
Rather than performing brute-force search, we approximate the optimization landscape through explicit surrogate models that capture stable trends in both learning dynamics and system behavior. Specifically, in the following two subsections, we model validation loss using a parametric polynomial approximation fitted from empirical training runs, and characterize inference latency via a roofline-based hardware performance model. This approach enables principled and scalable identification of Pareto-optimal architectures under fixed hardware constraints.

\subsection{Precision Modeling via Loss}
\label{sec:loss_model}
Beyond reducing the cost of hyperparameter search, our goal is to capture systematic and generalizable relationships between architectural design choices and model quality. We therefore construct an explicit analytical surrogate for validation loss, approximating the true objective in~\autoref{eq:problem} using empirical scaling behavior observed during training.

As described in~\autoref{sec:pareto_nas}, we train \trainarch{} architectures covering both dense and MoE models and fit an empirical scaling law based on~\autoref{eq:loss}. This polynomial approximation enables direct prediction of validation loss from architectural parameters, facilitating efficient exploration of the design space.

Consistent with prior unified scaling analyses~\cite{clark2022unified,krajewski2024scaling}, we model loss as a separable function of architecture components. 
Our empirical results reveal that sparsity-driven and base-capacity terms follow different width scaling exponents,
\begin{equation}
\hat{\mathcal{L}}(\bm{\theta}) = \frac{\kappa_l}{l^{\alpha_l}} + \frac{\kappa_\rho \cdot \rho^{\alpha_\rho}}{r^{\alpha_r} d^{\beta_1}} + \frac{\kappa_d}{r^{\alpha_r} d^{\beta_2}} + \frac{\kappa_m}{d_m^{\alpha_m}} + \mathcal{L}_\infty,
\label{eq:loss}
\end{equation}
where $\phi = \{\kappa_l, \kappa_\rho, \kappa_d, \kappa_m, \alpha_l, \alpha_\rho, \alpha_r, \alpha_m, \beta_1, \beta_2, \mathcal{L}_\infty\}$ are fitted parameters. Note that the KV-cache dimension satisfies $d_m = d / \mathrm{gqa}$, where $\mathrm{gqa} = n_h / n_{kv}$ is the GQA group ratio; this reparameterization is used in the theoretical analysis of \autoref{sec:theory}.

\subsection{Performance Modeling via Latency}
\label{sec:latency_model}
For the latency term, we derive an approximate mathematical expression grounded in roofline analysis, as latency is determined by the interaction between model compute, memory access patterns, and hardware characteristics. The hardware parameters required for this modeling include peak compute throughput and sustained memory bandwidth. While our current formulation targets conventional memory-centric accelerator architectures, it naturally generalizes to other hardware systems, which we leave for future work.

In this paper, we derive a first-principles latency model grounded in the roofline framework~\citep{williams2009roofline}. The roofline model characterizes computational kernels by arithmetic intensity $\mathcal{I} = \mathcal{F}/\mathcal{M}$ (FLOPs per byte). Given hardware with peak compute $\pi_H$ (FLOPS) and memory bandwidth $\beta_H$ (Bytes/s), ideal latency under the theoretical roofline model $\mathcal{T}$ satisfies:
\begin{equation}
\mathcal{T} = \max\left(\frac{\mathcal{F}}{\pi_H}, \, \frac{\mathcal{M}}{\beta_H}\right).
\label{eq:roofline}
\end{equation}

Here we directly give out the total inference latency under the roofline modeling. For a model with $l$ layers, $S_{\text{in}}$ input tokens, and $S_{\text{out}}$ generated tokens, the end-to-end latency $T_{\text{total}}$, including prefill and decode, is formulated as follows.
\begin{equation}
\hat{T_{\bm{\theta}}} = T_{\text{total}}(S_{\text{in}}, S_{\text{out}})
= l \cdot T_{\text{layer}}^{\text{pre}}(S_{\text{in}})
+ \sum_{S=1}^{S_{\text{out}}} l \cdot T_{\text{layer}}^{\text{dec}}(S+S_{in}),
\end{equation}

All FLOP counts and memory-traffic analysis are derived in~\autoref{app:problem_formulation}.

\subsection{Pareto-optimal architectures}

During the empirical experiments, we search for optimal LLM architectures via Pareto frontier analysis. 
An architecture $\bm{\theta}^\star$ is said to be Pareto-optimal if
\begin{equation}
\nexists\, \bm{\theta} \neq \bm{\theta}^\star \;\quad\text{s.t.}\; \quad
\mathcal{L}(\bm{\theta}) \le \mathcal{L}(\bm{\theta}^\star) \;\land\; 
T(\bm{\theta};H,W) \le T(\bm{\theta}^\star;H,W),
\end{equation}
with at least one inequality strict. The set of all such $\bm{\theta}^\star$ defines the Pareto frontier in the loss--latency plane.
In the following section, we present empirical experiments for Pareto-optimal architecture discovery under fixed hardware and context constraints.

By combining the empirical loss model (\autoref{sec:loss_model}) with the analytical latency model (\autoref{sec:latency_model}), we obtain two jointly optimizable objective functions that capture accuracy and inference efficiency, respectively. Within the Pareto optimality framework, we perform both empirical evaluation and analytical reasoning to identify architectures that optimally trade off validation loss and end-to-end inference latency on a given hardware platform.

We refer to the resulting architecture selection principle, together with its associated parameter scaling behavior under hardware constraints, as the hardware co-design scaling law. This law serves as a practical guideline for selecting and deploying on-device large language models under strict latency and resource budgets.

\def\nas{\textsc{PLAS}}
\section{Pareto-Optimal Architecture Search}
\label{sec:pareto_nas}

\begin{figure}[h]
    \centering
    \includegraphics[width=0.9\linewidth]{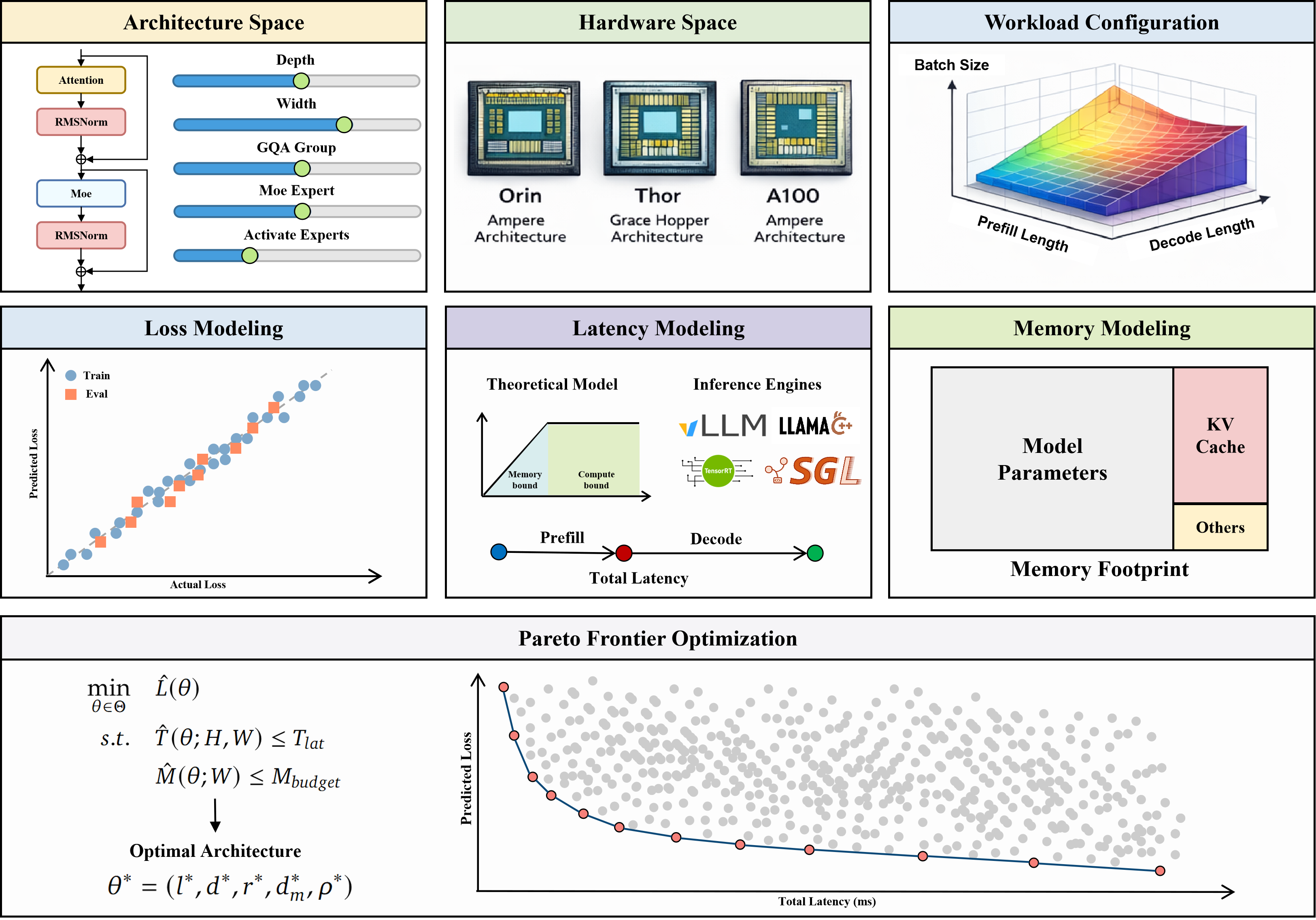}
    \caption{Overview of Pareto-optimal LLM Architecture Search framework (\nas{}). The framework integrates (1) empirical loss modeling via scaling law fitting, (2) roofline-based latency estimation, and (3) Pareto frontier construction to enable hardware-aware architecture selection.}
    \label{fig:overview_plas}
\end{figure}

This section presents \nas{} (\textbf{P}areto-optimal \textbf{L}LM \textbf{A}rchitecture \textbf{S}earch), a framework that jointly models training loss and inference latency to enable hardware-aware architecture selection. We first construct an empirical loss model by fitting results from \trainarch{} trained architectures to approximate validation loss without exhaustive search. We then characterize inference latency through roofline-based analytical modeling and practical measurements on edge platforms. Finally, we integrate both models to derive Pareto frontiers and demonstrate how they guide architecture selection under different application-specific latency budgets. \autoref{fig:overview_plas} illustrates the overall workflow.

\subsection{Loss Prediction via Scaling Laws}
\label{sec:loss_prediction}

Obtaining a high-fidelity parametric scaling law is non-trivial. Our fitting is grounded in \trainarch{} trained Transformer configurations spanning both sparse (MoE) and dense architectures, each trained for a fixed 10B-token budget under tightly controlled settings. The architectural configurations are carefully selected to span the full design space, jointly varying depth, width, MoE sparsity, FFN expansion ratio, and KV-cache dimensions (detailed search space in~\autoref{app:search_space}), while avoiding degenerate or ill-conditioned regimes.

\subsubsection{Pre-training Protocol}
\label{sec:pretrain_protocol}

All models share the following training setup to ensure fair comparison:

\begin{itemize}[leftmargin=*, itemsep=2pt]
    \item \textbf{Training Data.} Each configuration is trained on 10B tokens comprising a mixture of general corpus, mathematical reasoning, and code data, sufficient to observe scaling behavior while remaining computationally tractable. The training corpus will be released upon publication.

    \item \textbf{Optimization.} All models are trained using the AdamW optimizer with $\beta_1 = 0.90$, $\beta_2 = 0.95$, and weight decay of $0.01$. The learning rate follows a cosine decay schedule from $1\times10^{-4}$ to $1\times10^{-6}$, with linear warmup over the first $0.2\%$ of training steps. QK-Norm is employed to enhance training stability, particularly for MoE configurations. All experiments use a global batch size of 256.

    \item \textbf{Evaluation.} Model performance is evaluated using upstream validation loss on a held-out subset of approximately 1B tokens, averaged over the final 10 optimization steps to mitigate variance. We further assess generalization by reporting perplexity on the WikiText-2 test set.
\end{itemize}

Further details on the pre-training protocol are provided in~\autoref{app:pretrain_details}.

\subsubsection{Scaling Law Fitting}
\label{sec:scaling_fit}

\begin{figure}[h]
\centering
\includegraphics[width=0.5\linewidth]{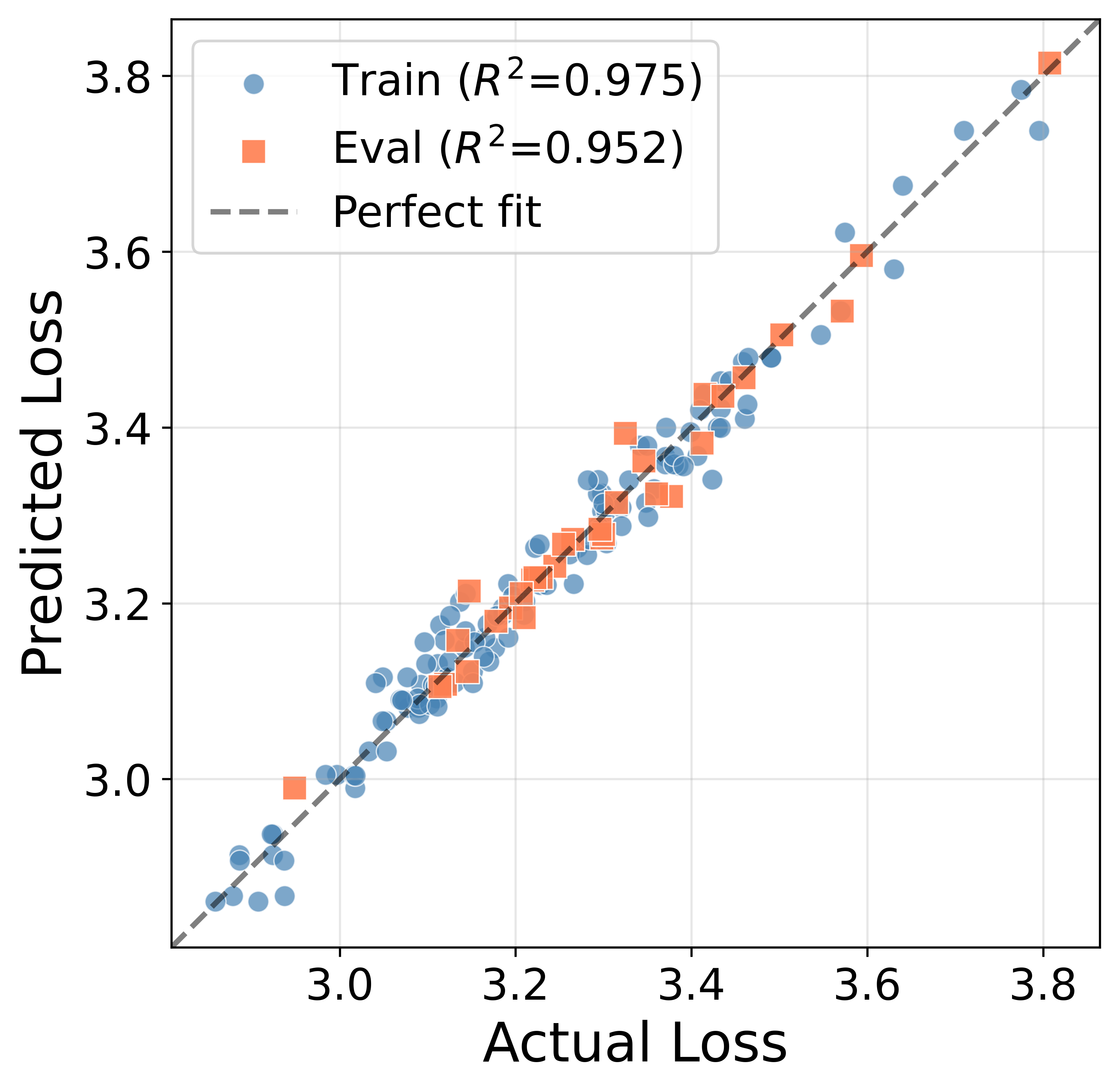}
\caption{Scaling law fit quality. Training $R^2 = 0.975$ (138 configurations); validation $R^2 = 0.952$ (32 held-out configurations).}
\label{fig:loss_fit}
\end{figure}

We fit a parametric scaling law of the form in~\autoref{eq:loss} using nonlinear least squares on 120 training configurations, with 17 held-out configurations reserved for validation. This extensive and structured exploration enables a stable fit with strong generalization: as shown in~\autoref{fig:loss_fit}, the resulting model achieves a training $R^2$ of \textbf{0.975} and a validation $R^2$ of \textbf{0.952}. Such predictive accuracy is difficult to obtain in practice, as loss landscapes across architectural dimensions are highly non-convex and often confounded by parameter coupling effects.

Despite operating over a substantially more heterogeneous architectural space that includes both dense and sparse models, the fitted scaling law exhibits stable and consistent exponents across depth, width, sparsity, and FFN expansion. This level of consistency is comparable to prior empirical scaling analyses~\cite{abnar2025parameters,krajewski2024scaling}, while achieving stronger generalization performance on held-out configurations, suggesting improved robustness beyond architecture-specific fitting. The fitted coefficients and complete functional form are provided in~\autoref{app:scaling_coefficients}.

Crucially, the quality of the fit validates our central premise: architecture-level loss can be modeled explicitly and predictably when training compute, data budget, and optimization protocol are fixed, thereby enabling principled extrapolation and Pareto-optimal architecture selection under hardware constraints.

\subsection{Latency Modeling}
\label{sec:latency_modeling}

To enable efficient architecture search, we require fast and accurate latency estimation that can evaluate tens of thousands of configurations without exhaustive measurement. Our framework employs roofline-based analytical modeling as the primary evaluation backend, with empirical validation for top candidates.

\subsubsection{Roofline-Based Prediction}
\label{sec:roofline_prediction}

We estimate inference latency by classifying each operator as compute-bound or memory-bound based on its arithmetic intensity relative to hardware capabilities. For each operator, latency is estimated from FLOPs, memory access volume, and hardware peak throughput (compute capacity and memory bandwidth). This analytical approach enables evaluation of \textbf{50,000+ configurations in approximately 20 minutes}, making it ideal for large-scale exploration.

To ensure prediction fidelity, we validate top Pareto candidates via empirical measurement using the vLLM inference engine with subprocess isolation for accurate GPU memory accounting. The roofline predictions exhibit strong correlation with measured latencies (details in~\autoref{app:latency_details}), confirming the reliability of our analytical approach for architecture ranking.

\subsubsection{Workload Configuration}
\label{sec:workload_config}

For on-device deployment targeting VLA workloads in autonomous driving, we focus on batch size $B=1$ with 1,024 input tokens and 16 output tokens. Under these settings:

\begin{itemize}[leftmargin=*, itemsep=2pt]
    \item \textbf{Prefill Latency} scales with input sequence length due to attention computation ($O(S^2)$ complexity), and is primarily compute-bound at moderate sequence lengths.
    
    \item \textbf{Decode Latency} is dominated by weight loading from memory, as each token generation requires accessing the full model weights while performing minimal computation per byte loaded.
\end{itemize}

The appropriate optimization target depends on workload characteristics: decode latency for interactive or streaming applications where per-token throughput is critical, prefill latency for long-context processing with short outputs, and total end-to-end latency for balanced tasks. As we demonstrate in \autoref{sec:pareto}, different optimization targets yield markedly different optimal architectures, motivating our multi-objective Pareto analysis. Further details on latency scaling behavior across sequence lengths and batch sizes are provided in~\autoref{app:latency_details}.

\subsection{Pareto Frontier Analysis}
\label{sec:pareto}

Given explicit loss and latency models, we cast architecture selection as a bi-objective optimization problem and identify Pareto-optimal designs that jointly minimize validation loss and inference latency. This formulation enables systematic exploration of the accuracy–efficiency trade-off and supports principled, scenario-aware architecture selection under hardware constraints.

\subsubsection{Frontier Construction}
\label{sec:frontier_construction}

Given loss predictions $\{\hat{\mathcal{L}}(\bm{\theta}_i)\}$ and latency estimates $\{\hat{T}(\bm{\theta}_i)\}$ for a set of architectural configurations, we identify the practical Pareto frontier as:
\begin{equation}
\label{prac_pareto}
\mathcal{P} = \{\bm{\theta}_i : \nexists\, \bm{\theta}_j \; \text{s.t.} \; \hat{\mathcal{L}}(\bm{\theta}_j) < \hat{\mathcal{L}}(\bm{\theta}_i) \land \hat{T}(\bm{\theta}_j) < \hat{T}(\bm{\theta}_i)\}
\end{equation}

We construct the Pareto frontier using an adaptive search strategy. Starting from an initial set of architectures generated via Latin hypercube sampling to ensure broad coverage of the design space, we identify the current Pareto-optimal set based on predicted loss and latency. We then iteratively refine the search by sampling new configurations in sparsely covered regions of the frontier and in local neighborhoods of Pareto-optimal points. This process repeats until the frontier stabilizes and no further improvements are observed.

\subsubsection{Precision–Performance Trade-off}
\label{sec:precision_tradeoff}

\autoref{fig:pareto_frontier} presents Pareto frontiers under three latency objectives (prefill, decode, and total), each comparing FP16 and INT8 precision. Across all scenarios, INT8 quantization consistently shifts the frontier toward lower latency at equivalent loss, demonstrating clear efficiency gains from reduced-precision inference. 

\begin{figure}[h]
    \centering
    \includegraphics[width=0.9\linewidth]{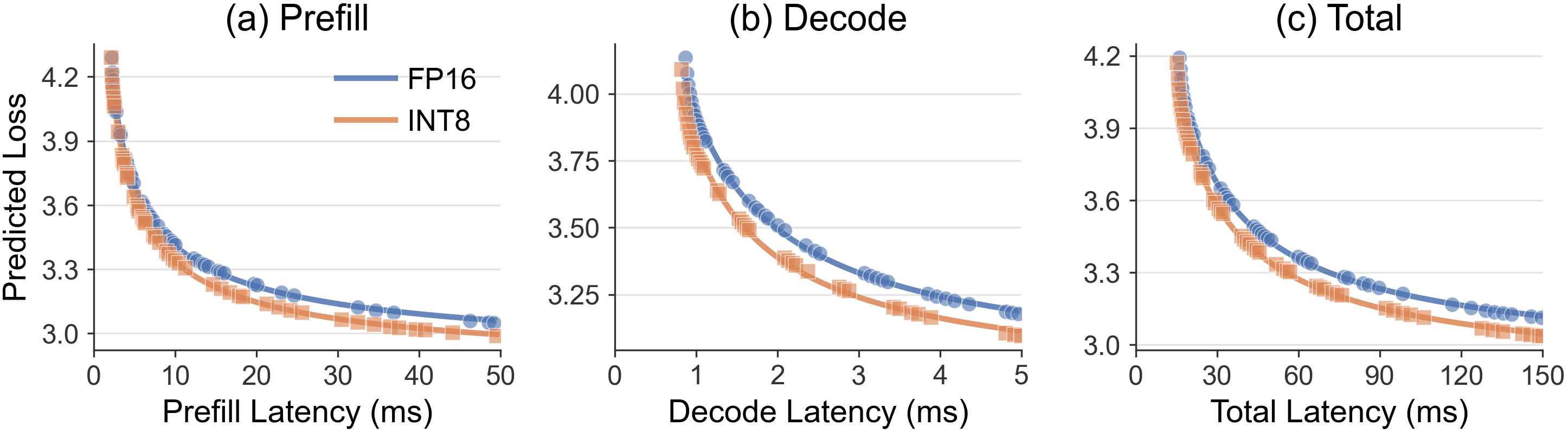}
    \caption{Pareto frontiers under prefill (1,024 tokens), decode (16 tokens), and total latency optimization on NVIDIA Jetson Orin, comparing FP16 and INT8 precision.}
    \label{fig:pareto_frontier}
\end{figure}

However, the observed speedup is notably less than the theoretical $2\times$ improvement. This sub-linear scaling arises from two primary factors: (1) INT8 acceleration applies only to linear operations (matrix multiplications), while non-linear components—attention softmax, layer normalization, and activation functions—remain in higher precision; and (2) quantization and dequantization overhead at layer boundaries partially offsets the computational savings from reduced-precision arithmetic. These observations suggest that realizing the full potential of quantized inference requires co-designed architectures that minimize non-linear operation overhead and reduce precision-conversion frequency—a promising direction for future work.

\subsubsection{Architecture Selection Guidelines}
\label{sec:selection_guidelines}

The Pareto frontier provides a menu of optimal configurations for different latency budgets. We map representative latency targets to application domains in~\autoref{tab:applications}, providing practitioners with actionable guidance for architecture selection.

\begin{table}[h]
\vspace{-0.1cm}
\centering
\caption{Latency requirements for representative edge deployment scenarios.}
\label{tab:applications}
\small
\begin{tabular}{lll}
\toprule
Application & Latency Target & Rationale \\
\midrule
Embodied AI & $<$20 ms (decode) & Real-time interaction \\
Smart Home & $<$500 ms (total) & Conversational response \\
Autonomous Driving & $<$100 ms (total) & Safety-critical decisions \\
Private Serving & $<$2 s (total) & Quality-focused, on-device \\
\bottomrule
\end{tabular}
\end{table}

To select an architecture for a target application, practitioners first identify the latency budget dictated by system requirements, then determine the relevant optimization objective based on workload characteristics. The appropriate Pareto frontier is consulted to locate the configuration operating at the target latency, which by construction achieves the lowest attainable loss within that budget. The corresponding architectural parameters can then be directly read off and deployed, just like the different region in~\autoref{fig:pareto_app}.

\begin{figure}[h]
    \centering
    \includegraphics[width=0.6\textwidth]{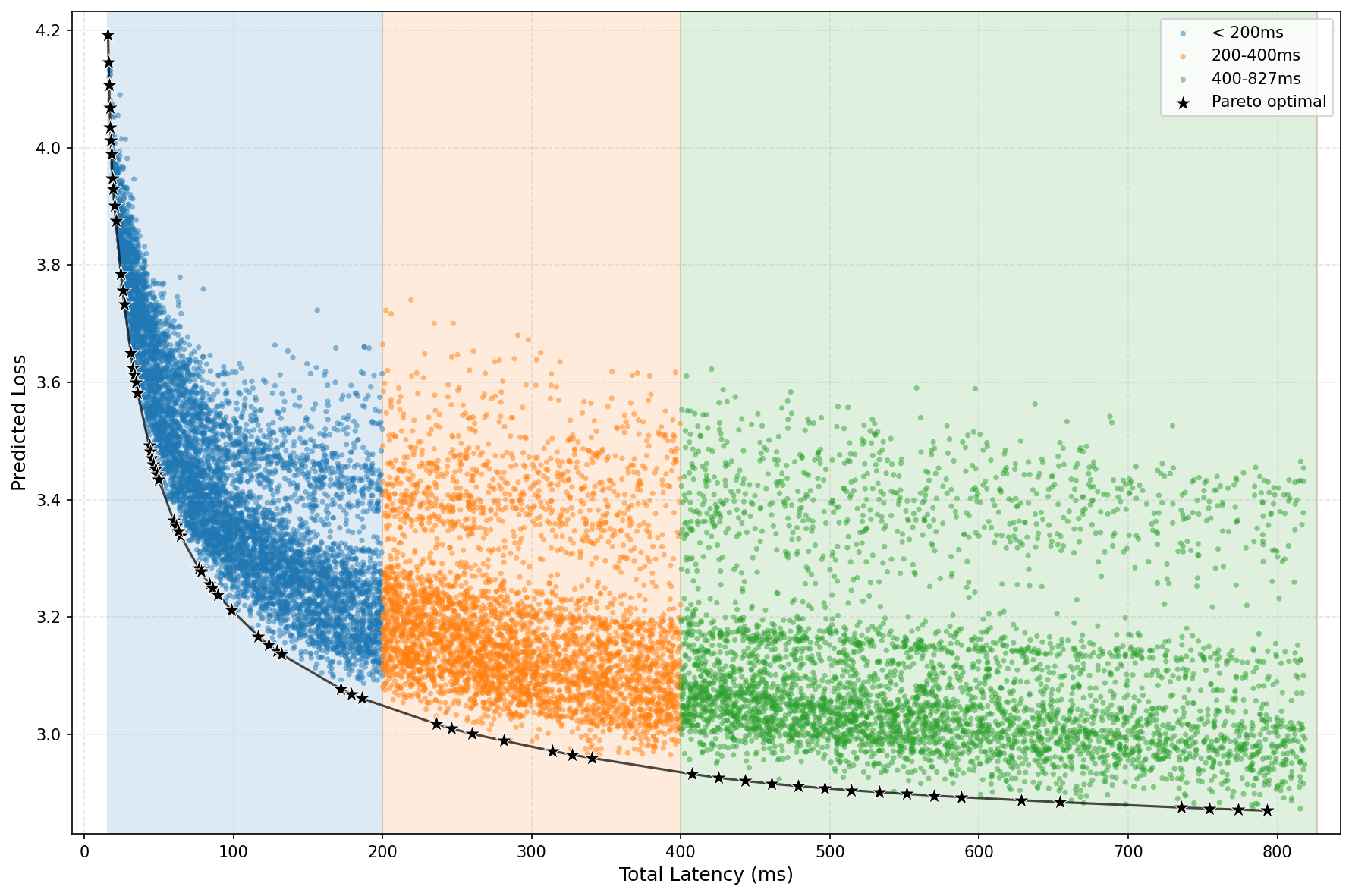}
    \caption{Different applications live in completely different regions of the Pareto frontier.}
    \label{fig:pareto_app}
\end{figure}

\subsubsection{Architecture Parameter Evolution}
\label{sec:arch_evolution}

\begin{figure}[t]
    \centering
    \includegraphics[width=0.9\textwidth]{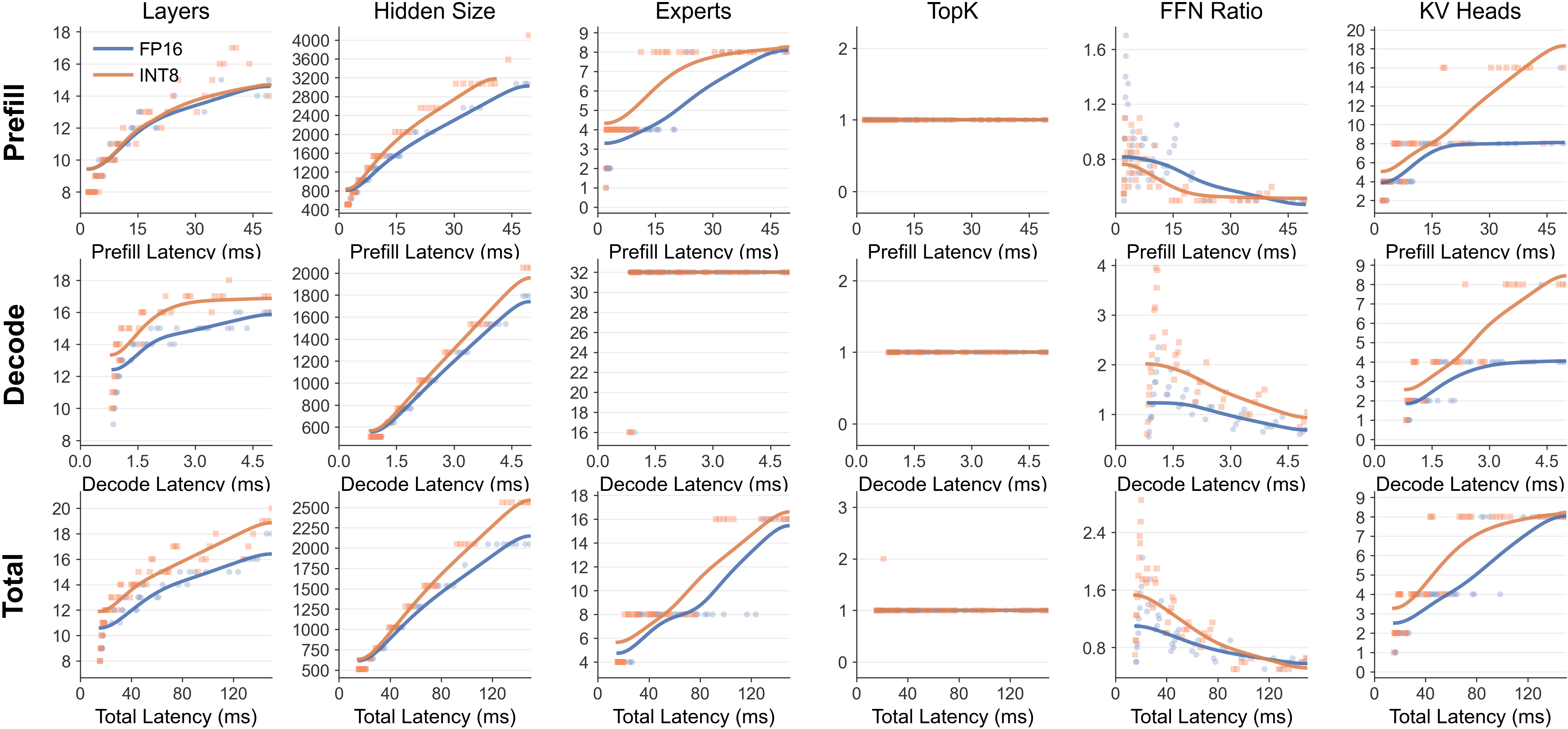}
    \caption{Architecture parameter evolution along Pareto frontiers under prefill-optimized (top), decode-optimized (middle), and total latency-optimized (bottom) objectives, comparing FP16 and INT8 precision. As the latency budget increases, optimal configurations exhibit systematic shifts in depth, width, expert count, and FFN expansion ratio.}
    \label{fig:config_evolution}
\end{figure}

\autoref{fig:config_evolution} traces how Pareto-optimal architectures evolve as the latency budget increases across different optimization objectives. Several key patterns emerge from this analysis.

\paragraph{MoE Dominance.} Sparse MoE architectures constitute 100\% of Pareto-optimal configurations across all latency regimes. Under the batch-one constraint typical of on-device deployment, MoE models achieve superior efficiency over dense counterparts: they provide greater model capacity (total parameters) while maintaining comparable activated parameters per token, yielding better loss-per-FLOP trade-offs. This finding strongly motivates the adoption of sparse architectures for edge deployment scenarios.

\paragraph{Wide-and-Shallow Preference.} In contrast to conventional LLM designs that favor deep, narrow architectures, the Pareto-optimal configurations exhibit a distinctive ``wide-and-shallow'' pattern: depth remains relatively constrained (generally below 20 layers) while width is substantially larger than comparably-sized models. Both dimensions increase with the latency budget, but width saturates earlier at the search space upper bound, after which additional capacity is allocated to depth. This pattern suggests that under strict latency constraints, width provides more efficient loss reduction per unit latency than depth—a finding with important implications for on-device model design.

\paragraph{Phase-Dependent Expert Configuration.} The optimal MoE configuration differs markedly between prefill and decode phases, driven by their distinct computational characteristics:

\begin{itemize}[leftmargin=*, itemsep=2pt]
    \item \textit{Prefill phase}: With relatively few input tokens per expert in on-device scenarios, increasing the number of experts requires loading more parameters without proportional compute utilization, shifting the bottleneck from compute-bound to memory-bound operation and degrading hardware efficiency. Consequently, prefill-optimized configurations favor fewer experts, with the expert count increasing gradually only as the latency budget relaxes.
    
    \item \textit{Decode phase}: At batch size one, each token activates a fixed subset of experts, so increasing the total number of experts incurs negligible additional latency while substantially expanding model capacity. Decode-optimized configurations therefore favor maximizing the expert count within the search space.
    
    \item \textit{Routing strategy}: Both phases consistently prefer Top-$K$=1 routing, as activating multiple experts per token substantially increases memory bandwidth consumption during the memory-bound decode phase.
\end{itemize}

\paragraph{Balanced Configuration under Total Latency.} When optimizing for total end-to-end latency, the optimal expert count reflects a trade-off between prefill and decode contributions. Prefill-dominated workloads (long input, short output) favor fewer experts; decode-dominated workloads (short input, long generation) favor more experts. Under balanced input–output ratios typical of many practical applications, the optimal configuration converges to a moderate expert count (typically around 8), consistent with the design choices observed in recent production models~\cite{guo2025deepseek,yang2025qwen3}.

\paragraph{Compact FFN Expansion.} Notably, the optimal FFN expansion ratio under on-device constraints is substantially smaller than the conventional $4\times$ used in standard Transformer designs. In many Pareto-optimal configurations, ratios below $1\times$ emerge as viable design choices, suggesting that reallocating parameters from FFN width to other dimensions (e.g., more experts or increased model width) yields better efficiency under memory-constrained inference.

\subsubsection{Empirical Validation}
\label{sec:empirical_validation}

\begin{figure}[htbp]
    \centering
    \subfigure[Pareto frontier]{
        \centering
        \includegraphics[width=0.42\linewidth]{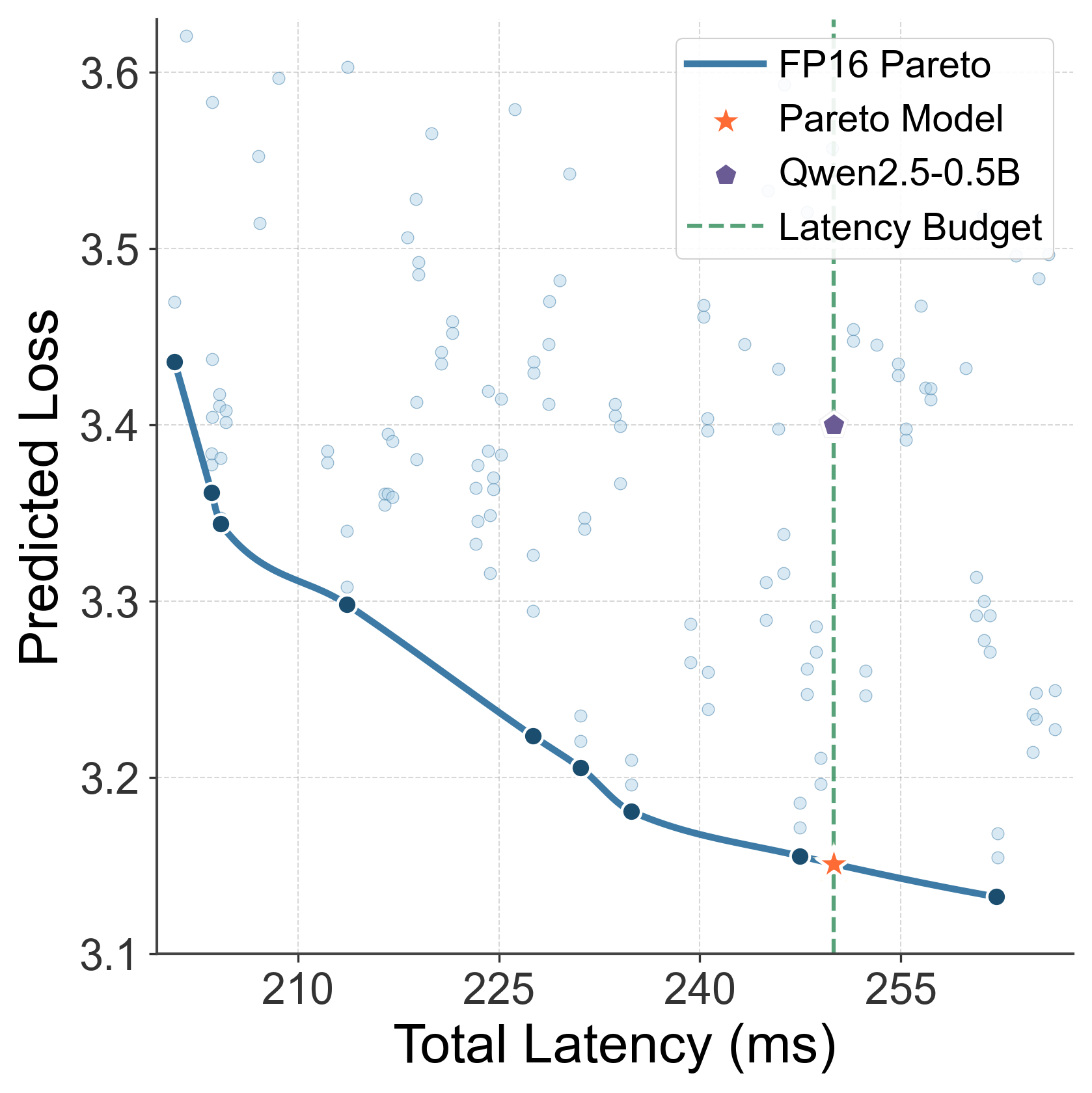}
        \label{fig:pareto_validation}
    }
    \subfigure[Training dynamics]{
        \centering
        \includegraphics[width=0.42\linewidth]{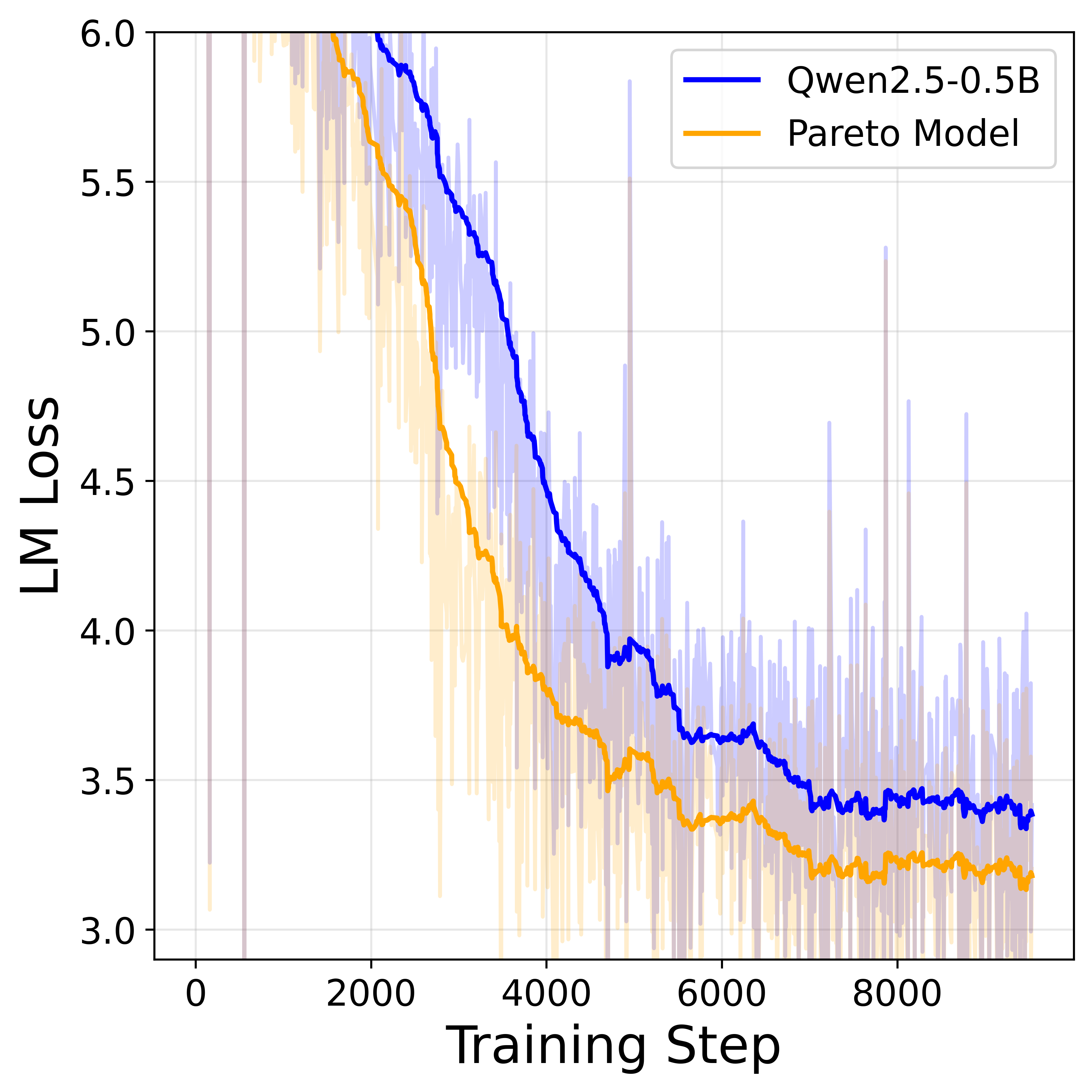}
        \label{fig:training_curves}
    }
    \caption{Empirical validation on NVIDIA Jetson Orin. (a) Pareto frontier with the co-designed model and Qwen2.5-0.5B marked. (b) Training loss curves showing faster convergence for the Pareto-optimal architecture.}
    \label{fig:empirical_validation}
\end{figure}

To validate the practical benefits of hardware-aware architecture selection, we conduct an empirical comparison against an existing production model. Using vLLM, we first measure the inference latency of Qwen2.5-0.5B on the target hardware (NVIDIA Jetson Orin), then identify a Pareto-optimal architecture from our framework that matches this measured latency, as shown in \autoref{fig:pareto_validation}. Both models are trained using an identical data mixture and optimization protocol to ensure fair comparison.


\autoref{fig:training_curves} shows that the co-designed architecture achieves consistently lower training loss throughout optimization, indicating better utilization of model capacity under the same computational budget. For downstream evaluation, we measure perplexity on WikiText-2 after training: the co-designed architecture achieves \textbf{19.42\%} lower perplexity compared to Qwen2.5-0.5B (50.88 vs.\ 63.14). This substantial improvement at equivalent inference latency demonstrates that hardware-aware architecture selection yields measurable quality gains without sacrificing deployment efficiency, validating the practical utility of the \nas{} framework.
\subsubsection{Summary of Findings}
\label{sec:findings_summary}

We summarize the key findings from our Pareto analysis:

\begin{itemize}[leftmargin=*, itemsep=2pt]
    \item \textbf{Sparse architectures dominate.} MoE configurations constitute 100\% of Pareto-optimal designs under on-device batch-one inference, providing superior capacity-efficiency trade-offs.
    
    \item \textbf{Wide-and-shallow designs are preferred.} Optimal architectures are wider and shallower than conventional designs at equivalent latency, with width providing more efficient loss reduction under tight constraints.
    
    \item \textbf{Phase-specific expert configuration.} Prefill and decode phases demand opposing expert configurations; total-latency optimization requires balancing both contributions.
    
    \item \textbf{Compact FFN expansion.} The optimal FFN expansion ratio is substantially smaller than the conventional $4\times$, with ratios below $1\times$ emerging as viable choices.
    
    \item \textbf{Quantization helps but sub-linearly.} INT8 quantization consistently improves the Pareto frontier, though gains are sub-linear due to non-linear operation and precision-conversion overhead.
    
    \item \textbf{No universal optimal architecture.} Optimal designs are hardware- and workload-specific; architectures do not transfer across platforms or deployment scenarios.
\end{itemize}

These observations provide actionable guidance for practitioners designing on-device LLMs, while the complete Pareto frontier enables precise architecture selection for specific deployment constraints. The \nas{} framework and trained model checkpoints will be released to facilitate further research in hardware-aware neural architecture design.

\section{Theoretical Framework for Hardware-Aware Architecture Optimization}
\label{sec:theory}

\subsection{From Empirical Search to Principled Optimization}
\label{sec:theory:motivation}

\autoref{sec:pareto} empirically discovered Pareto frontiers through large-scale search over \obsarch{} architectures. While effective, this approach raises fundamental questions: Can we predict optimal architectures without exhaustive search? What structural principles govern the Pareto frontier? How do solutions generalize to new hardware platforms?

This section addresses these questions by developing a theoretical framework that derives closed-form solutions for optimal architectures under different hardware constraint regimes. Rather than treating Pareto-optimal designs as empirical outcomes, we formalize architecture selection as an explicit constrained optimization problem.

\textbf{Key Insight.} Different hardware constraint regimes induce qualitatively distinct optimal solutions, particularly in how sparsity (MoE activation rate $\rho$) should be allocated. This explains why certain architectural patterns consistently emerge on the empirical Pareto frontier.

\subsection{Problem Formulation and Constraint Types}
\label{sec:theory:formulation}

We formalize the hardware co-design problem as:
\begin{equation}
\begin{aligned}
    \min_{\bm{\theta}} \quad & \hat{L}(\bm{\theta}) \\
    \text{subject to} \quad & \hat{T}(\bm{\theta}; H, W) \leq T_{\text{lat}}, \quad \hat{M}(\bm{\theta}) \leq M_{\text{budget}}
\end{aligned}
\label{eq:unified-opt}
\end{equation}
where $\bm{\theta} = (l, d, r, \rho, \text{gqa})$ denotes depth, width, FFN ratio, activation rate, and GQA ratio.

Building on roofline analysis (\autoref{app:problem_formulation}), we identify three constraint types. The \textbf{prefill constraint} (compute-bound) takes the form $l \cdot \xi_F \cdot d^2 \leq \bar{F}_p$ where $\xi_F = 4 + 4/\text{gqa} + 6r$. The \textbf{decode constraint} (bandwidth-bound) includes both weight loading and KV-cache access: $l \cdot (\xi_W^{\text{dec}} d^2 b_w + 2\bar{S} d b_{kv}/\text{gqa}) \leq \bar{M}_d$ where $\xi_W^{\text{dec}} = 2 + 2/\text{gqa} + 3r$. The \textbf{memory constraint} (storage-bound) accounts for all model parameters: $l \cdot \xi_W^{\text{all}} \cdot d^2 \cdot b_w \leq M_{\text{budget}}$ where $\xi_W^{\text{all}} = 2 + 2/\text{gqa} + 3r/\rho$. Here $\bar{S} = S_{\text{in}} + (S_{\text{out}}+1)/2$ denotes the average context length during decoding.

\textbf{Key Observation.} The activation rate $\rho$ appears only in $\xi_W^{\text{all}}$, reflecting that sparsity affects storage but not per-token computation. This asymmetry drives our main theoretical results.

\subsection{Optimal Activation Rate Across Constraint Regimes}
\label{sec:theory:activation}

We characterize optimal activation rates $\rho^*$ for three canonical regimes: latency-only (inference speed-limited with ample memory), memory-only (storage-limited with sufficient compute), and dual-constrained (tightly coupled hardware limits). These regimes naturally arise in different deployment scenarios: edge devices are often memory-constrained, automotive platforms are typically latency-constrained, and embedded systems are frequently dual-constrained.

\begin{theorem}[Latency-Constrained Regime]
\label{thm:latency-only}
When only latency constraints are active (memory unconstrained):
\begin{equation}
    \rho^* = \rho_{\min}
    \label{eq:rho-latency}
\end{equation}
\end{theorem}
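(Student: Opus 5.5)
The argument is a monotonicity argument in $\rho$. The loss surrogate $\hat{\mathcal{L}}$ of \autoref{eq:loss} depends on $\rho$ in exactly one place, while in the latency-only regime the feasible set does not depend on $\rho$ at all. So for fixed $(l,d,r,\mathrm{gqa})$, minimising over $\rho$ reduces to minimising a single monotone term over $[\rho_{\min},1]$.

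\textbf{Step 1: the feasible set is independent of $\rho$.} With the memory constraint dropped, the active constraints are the prefill bound $l\,\xi_F d^2\le \bar F_p$ and the decode bound $l(\xi_W^{\text{dec}} d^2 b_w + 2\bar S d b_{kv}/\mathrm{gqa})\le \bar M_d$. Neither $\xi_F=4+4/\mathrm{gqa}+6r$ nor $\xi_W^{\text{dec}}=2+2/\mathrm{gqa}+3r$ involves $\rho$; this is the Key Observation of \autoref{sec:theory:formulation}. Hence the feasible region factors as $\mathcal{F}_{(l,d,r,\mathrm{gqa})}\times[\rho_{\min},1]$. Here $\rho_{\min}=1/E_{\max}$, or more generally $K_{\min}/E_{\max}$, is set by the search space. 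Consequently
\[
\min_{\bm\theta}\hat{\mathcal{L}}=\min_{(l,d,r,\mathrm{gqa})\in\mathcal{F}}\ \min_{\rho\in[\rho_{\min},1]}\hat{\mathcal{L}},
\]
and it suffices to show that the inner minimiser is $\rho_{\min}$ for every fixed outer tuple.

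\textbf{Step 2: monotonicity of the loss in $\rho$.} Differentiating,
\[
\partial_\rho\hat{\mathcal{L}}=\kappa_\rho\alpha_\rho\rho^{\alpha_\rho-1}/(r^{\alpha_r}d^{\beta_1}),
\]
which is strictly positive provided $\kappa_\rho>0$ and $\alpha_\rho>0$. I would justify these sign conditions from the fitted coefficients in \autoref{app:scaling_coefficients}. They also have a clear interpretation: denser activation at fixed active FFN size means fewer total experts, hence less capacity and higher loss. With the derivative positive, the inner minimum is attained uniquely at $\rho=\rho_{\min}$. Substituting back, the outer problem over $(l,d,r,\mathrm{gqa})$ is unchanged, so the joint optimum has $\rho^*=\rho_{\min}$.

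\textbf{Main obstacle.} The difficulty is not the calculation but the modelling assumptions behind Step 1. First, the theorem rests on the sign of the empirical fit, so I would state $\kappa_\rho,\alpha_\rho>0$ explicitly as a hypothesis. Second, one must argue that $\rho$ truly leaves latency unchanged. In prefill, more experts means more weight traffic, as \autoref{sec:arch_evolution} itself notes. I would handle this by adopting the roofline regime of \autoref{sec:theory:formulation}: prefill is treated as compute-bound, so only FLOPs matter, and decode at $B=1$ loads only the active experts. Under that idealisation the result follows. A closing remark would note that if prefill weight traffic scaling with $1/\rho$ were included, the optimum could become interior. That interior case is precisely what the dual-constrained regime addresses.
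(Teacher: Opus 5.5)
Your proposal is correct and follows the paper's argument (Appendices~\ref{app:D1} and~\ref{app:P1}), which likewise rests on two facts: $\partial g_T/\partial\rho = 0$ for the prefill and decode latency constraints, and $\partial\hat{\mathcal{L}}/\partial\rho = \alpha_\rho\kappa_\rho\rho^{\alpha_\rho-1}/(r^{\alpha_r}d^{\beta_1}) > 0$, which together force $\rho^* = \rho_{\min}$. The only difference is presentational: the paper states this as a Lagrangian stationarity condition and treats each latency constraint separately, whereas your separability argument over $\mathcal{F}\times[\rho_{\min},1]$ reaches the same conclusion without KKT machinery and handles both latency constraints at once.
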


\begin{proof}[Proof Sketch]
Since $\partial \xi_F / \partial \rho = \partial \xi_W^{\text{dec}} / \partial \rho = 0$, the Lagrangian gradient $\partial \mathcal{L} / \partial \rho = \partial \hat{L} / \partial \rho > 0$ everywhere. Thus $\rho^*$ occurs at the boundary. Full proof in~\autoref{app:D1} and~\autoref{app:P1}.
\end{proof}

\textbf{Interpretation.} Under latency constraints, MoE sparsity provides a ``free lunch'': reducing $\rho$ (activating fewer experts) decreases loss without increasing per-token latency, since only $K$ experts are computed regardless of total pool size $E$. The optimal strategy is therefore to maximize sparsity (minimize $\rho$) within the fixed latency budget. For latency-critical applications such as autonomous driving with sub-50ms requirements, this suggests preferring top-1 routing and increasing the total expert count as much as memory permits.

\begin{theorem}[Memory-Constrained Regime]
\label{thm:memory-only}
When only memory constraint is active (latency unconstrained):
\begin{equation}
    \rho^* = \left[ \frac{\alpha_r \kappa_d}{(\alpha_\rho - \alpha_r) \kappa_\rho} \right]^{1/\alpha_\rho} \cdot d^{(\beta_1 - \beta_2)/\alpha_\rho}
    \label{eq:rho-memory}
\end{equation}
valid when $\alpha_\rho > \alpha_r$.
\end{theorem}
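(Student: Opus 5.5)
The plan is to exploit the structure of the memory constraint from \autoref{sec:theory:formulation}. With $l$, $d$ and $\mathrm{gqa}$ fixed, $\rho$ and $r$ enter $\xi_W^{\text{all}} = 2 + 2/\mathrm{gqa} + 3r/\rho$ only through the ratio $s := r/\rho$.

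When the memory constraint $l\,\xi_W^{\text{all}} d^2 b_w \le M_{\text{budget}}$ is active, it therefore pins $s$ to a value $s^\star(l,d,\mathrm{gqa})$. I would first argue that the constraint is binding at the optimum. Every term of $\hat{L}$ in \autoref{eq:loss} is nonincreasing in $r$, so any slack could be spent increasing $r$ (hence $s$) to lower the loss. With the constraint binding, I substitute $r = s^\star \rho$ and reduce the problem to a one-dimensional minimisation over $\rho$.

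After substitution, the only $\rho$-dependent part of \autoref{eq:loss} is
\begin{equation*}
f(\rho) = (s^\star)^{-\alpha_r}\left[\kappa_\rho\, d^{-\beta_1}\, \rho^{\alpha_\rho-\alpha_r} + \kappa_d\, d^{-\beta_2}\, \rho^{-\alpha_r}\right].
\end{equation*}
The depth term $\kappa_l/l^{\alpha_l}$ and the KV term $\kappa_m/d_m^{\alpha_m}$ are independent of $\rho$ and drop out. Differentiating gives
\begin{equation*}
f'(\rho) = (s^\star)^{-\alpha_r}\rho^{-\alpha_r-1}\left[(\alpha_\rho-\alpha_r)\kappa_\rho d^{-\beta_1}\rho^{\alpha_\rho} - \alpha_r\kappa_d d^{-\beta_2}\right].
\end{equation*}
Setting the bracket to zero yields $\rho^{\alpha_\rho} = \alpha_r\kappa_d d^{\beta_1-\beta_2}/\big((\alpha_\rho-\alpha_r)\kappa_\rho\big)$. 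This is exactly \autoref{eq:rho-memory} after taking the $1/\alpha_\rho$ power.

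The assumption $\alpha_\rho > \alpha_r$ is what makes this a minimiser. Under it the bracket is strictly increasing in $\rho$, negative as $\rho\to 0^+$ and positive for large $\rho$. Hence $f$ is unimodal, and the stationary point is its unique global minimum on $(0,\infty)$. If instead $\alpha_\rho \le \alpha_r$, then $f' < 0$ everywhere, and the optimum runs to the boundary $\rho = 1$. This explains the validity condition in the statement.

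As a consistency check I would also redo the derivation as a KKT system with multiplier $\lambda$ on the memory constraint. The stationarity equations in $r$ and in $\rho$ both contain $\lambda\, l\, d^2 b_w \cdot 3/\rho$ (with $r/\rho^2 = s/\rho$). Eliminating $\lambda$ between them reproduces the same condition, matching the Lagrangian style of \autoref{thm:latency-only}.

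The main obstacle is justifying the reduction to a one-dimensional problem, namely that for fixed $(l,d,\mathrm{gqa})$ the joint optimum over $(r,\rho)$ lies on the binding curve $r/\rho = s^\star$. I would handle this through the monotonicity-in-$r$ argument above. The interior condition $\rho^* \le 1$ (equivalently $r^* = s^\star\rho^*$ being admissible) I would state as an implicit assumption. Otherwise the formula should be read as projected onto $[\rho_{\min},1]$.
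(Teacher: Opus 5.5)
Your argument is correct and reaches exactly the paper's stationarity condition $(\alpha_\rho-\alpha_r)\kappa_\rho\rho^{\alpha_\rho}d^{\beta_2-\beta_1}=\alpha_r\kappa_d$, but by a somewhat different route. The paper (Appendix D2) writes the Lagrangian with multiplier $\mu_M$ and solves the $\rho$- and $r$-stationarity equations separately for $\mu_M l$. It then equates the two expressions and substitutes $\tilde D$, which is precisely the KKT ``consistency check'' you describe.

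Your main route instead uses that $\xi_W^{\mathrm{all}}$ depends on $(r,\rho)$ only through $r/\rho$. The binding constraint is therefore a ray in the $(r,\rho)$ plane, and the problem collapses to minimising $f(\rho)$ along it. The two routes agree at first order: eliminating $\mu_M$ amounts to setting $r\,\partial_r\hat{\mathcal{L}}+\rho\,\partial_\rho\hat{\mathcal{L}}=0$, which is the derivative along that ray.

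What your version adds is what the paper leaves implicit beyond first order:
\begin{itemize}
\item a justification that the memory constraint binds, since the loss is strictly decreasing in $r$;
\item a proof that the stationary point is the unique global minimiser on the ray;
\item an interpretation of $\alpha_\rho>\alpha_r$ as the dividing line between an interior optimum and the boundary $\rho=1$. The paper uses this condition only to make the right-hand side positive so that the root exists.
\end{itemize}
You also state explicitly that the formula must be projected onto $[\rho_{\min},1]$.

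What the paper's KKT route gives in return is reuse. The same expression for $\mu_M l$ feeds directly into the closed forms for $l^*$, $r^*$ and $\mathrm{gqa}^*$ in the rest of Case D2, which a one-dimensional reduction in $\rho$ alone does not provide.
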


\begin{proof}[Proof Sketch]
Eliminating the Lagrange multiplier from KKT conditions for $\rho$ and $r$ yields an algebraic relation. Substituting the loss function and solving gives~\autoref{eq:rho-memory}. Complete derivation in~\autoref{app:D2} and~\autoref{app:P2} .
\end{proof}

\begin{corollary}[Width-Sparsity Scaling Law]
\label{cor:width-sparsity}
Under memory constraints: $\rho^* \propto d^{(\beta_1 - \beta_2)/\alpha_\rho}$. With fitted exponents ($\beta_1 \approx -0.33$, $\beta_2 \approx 0.97$, $\alpha_\rho \approx 1.09$), this implies \emph{wider models should use sparser MoE}.
\end{corollary}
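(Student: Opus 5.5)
The corollary follows from Theorem~\ref{thm:memory-only}. The plan is to isolate the $d$-dependence of the closed-form optimum and then plug in the fitted exponents.

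\textbf{Step 1: isolate the $d$-dependence.} From \autoref{eq:rho-memory},
\[
\rho^* = C\, d^{(\beta_1-\beta_2)/\alpha_\rho},
\qquad
C=\left[\alpha_r\kappa_d/((\alpha_\rho-\alpha_r)\kappa_\rho)\right]^{1/\alpha_\rho}.
\]
The prefactor $C$ involves only fitted constants $\kappa_d,\kappa_\rho,\alpha_r,\alpha_\rho$. It does not depend on $d$, $l$, $r$, $\text{gqa}$, or $M_{\text{budget}}$.

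I would check this independence against the derivation behind Theorem~\ref{thm:memory-only}. The Lagrange multiplier is eliminated by taking the ratio of the stationarity conditions in $\rho$ and $r$. Since $\xi_W^{\text{all}}$ depends on $r$ and $\rho$ only through $r/\rho$, the constraint contributes the common factor $\partial \xi_W^{\text{all}}/\partial(r/\rho)$ to both conditions. That factor cancels, so the resulting relation among the loss terms carries no budget or depth dependence.

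The relation should read
\[
(\alpha_\rho-\alpha_r)\,\kappa_\rho\rho^{\alpha_\rho}d^{-\beta_1}=\alpha_r\kappa_d\, d^{-\beta_2}.
\]
The common factor $r^{-\alpha_r}$ cancels, which explains why $r$ drops out. Re-deriving this line confirms the sign conditions: we need $\alpha_\rho>\alpha_r$ and positive $\kappa$'s, so that $C>0$ and is well-defined. This gives the proportionality $\rho^*\propto d^{(\beta_1-\beta_2)/\alpha_\rho}$ at fixed fitted coefficients.

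\textbf{Step 2: substitute the fitted exponents.} With $\beta_1\approx-0.33$, $\beta_2\approx0.97$, $\alpha_\rho\approx1.09$, the exponent is $(-1.30)/1.09\approx-1.19<0$. Hence $\rho^*$ is strictly decreasing in $d$. Since $\rho=K/E$, a smaller activation rate means a sparser MoE, which proves the qualitative claim that wider models should be sparser.

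\textbf{Main obstacle: interior validity and feasibility.} The stationary point must be an interior minimizer, and it must satisfy $\rho^*\in(0,1]$. The substantive check is the condition $\alpha_\rho>\alpha_r$ with the fitted values from \autoref{app:scaling_coefficients}. If $\alpha_r$ is close to $\alpha_\rho$, then $C$ blows up, $\rho^*$ is clipped at $1$ for small $d$, and the scaling holds only above a threshold width.

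To handle this, I would state the corollary on the range of $d$ where $C\,d^{-1.19}\in(\rho_{\min},1)$. Outside that range the projected (boundary) solution applies, and it is still nonincreasing in $d$. The monotone conclusion therefore survives in all cases.
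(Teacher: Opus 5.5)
Your proposal is correct and follows the paper's route. The corollary is read off from the closed form of \autoref{thm:memory-only}, which \autoref{app:D2} derives exactly as you describe: equating the $\rho$- and $r$-stationarity conditions cancels $\mu_M l$ and gives $(\alpha_\rho-\alpha_r)\kappa_\rho\rho^{\alpha_\rho}d^{\beta_2-\beta_1}=\alpha_r\kappa_d$, and the fitted exponents then give $-1.30/1.09\approx-1.19<0$. Your clipping discussion goes beyond the paper, which tacitly assumes an interior optimum. The projection claim you assert there is true and takes one line to prove: at fixed $s=r/\rho$ the memory constraint is unchanged, and the loss reduces to $A\rho^{\alpha_\rho-\alpha_r}+B\rho^{-\alpha_r}$ with $A,B>0$, which for $\alpha_\rho>\alpha_r$ is unimodal in $\rho$ with minimizer $\rho^*(d)$.
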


\textbf{Interpretation.} Memory-constrained systems face a fundamental trade-off: storing all $E$ experts costs proportional to $1/\rho$, but increased sparsity provides capacity gains that benefit wider models more than narrower ones.~\autoref{eq:rho-memory} characterizes the optimal balance point. The width-sparsity coupling arises because the sparsity capacity term in the loss has negative width exponent $\beta_1 < 0$ (making sparsity more valuable for wider models), while the base capacity term has positive exponent $\beta_2 > 0$ (meaning dense capacity scales favorably with width). For practical deployment on memory-limited devices with 4--8~GB DRAM, a 2B-parameter model with $d \approx 2048$ should use $\rho \approx 0.15$ (e.g., $K=2, E=16$), while a 500M-parameter model with $d \approx 1024$ should use denser MoE at $\rho \approx 0.25$.

\begin{theorem}[Dual-Constrained Regimes]
\label{thm:dual-constrained}
When both latency and memory constraints are active, the optimal activation rate depends on which latency phase is limiting.

\textbf{(a) Prefill + Memory:}
\begin{equation}
    \rho^* = \frac{3 \eta_p b_w r}{\alpha_{\text{attn}}(2 - \eta_p b_w) + 6r}, \quad \eta_p = \bar{F}_p / M_{\text{budget}}, \quad \eta_p b_w < 2
    \label{eq:rho-dual-prefill}
\end{equation}

\textbf{(b) Decode + Memory:}
\begin{equation}
    \rho^* = \frac{3r}{\xi_M^* - \alpha_{\text{attn}}}, \quad \xi_M^* = \frac{(\alpha_{\text{attn}} + 3r) + \sqrt{(\alpha_{\text{attn}} + 3r)^2 + 4\eta\delta}}{2\eta}
    \label{eq:rho-dual-decode}
\end{equation}
where $\eta = \bar{M}_d / M_{\text{budget}}$, $\alpha_{\text{attn}} = 2 + 2/\text{gqa}$, and $\delta = 2\bar{S} b_{kv} / (\text{gqa} \cdot d \cdot b_w)$ is the KV-cache correction. Here $\xi_M^* \triangleq \xi_W^{\mathrm{all}}\big|_{\rho=\rho^*}$ denotes the equilibrium value of the storage coefficient (defined in~\autoref{app:problem_formulation}) under the dual constraint, obtained by solving a quadratic system.
\end{theorem}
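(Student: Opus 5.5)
The plan is to use the observation that when both constraints in \autoref{eq:unified-opt} are active, the optimum lies on the intersection of two constraint surfaces. Taking the \emph{ratio} of the two active constraints cancels the common factor $l\,d^2$ and leaves a purely algebraic relation among $(r,\rho,\text{gqa})$, which can then be solved for $\rho$. I therefore do not expect to need the full stationarity conditions in $\rho$. The KKT conditions are used only to argue that both multipliers are strictly positive, so that both constraints bind.

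\textbf{Step 1 (both constraints bind).} By the argument of \autoref{thm:latency-only}, $\partial\hat L/\partial\rho>0$ and $\rho$ does not enter the latency coefficients. Hence, if memory were slack, $\rho$ would be pushed down to $\rho_{\min}$. This raises $\xi_W^{\text{all}}=\alpha_{\text{attn}}+3r/\rho$ until the memory constraint binds. Conversely, $\hat L$ is decreasing in $l$, $d$ and $r$, so if latency were slack one could enlarge $l$ or $d$ while trading $\rho$ upward to remain within memory. In the dual regime both multipliers are therefore positive and both constraints hold with equality.

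\textbf{Step 2, part (a).} Write $\xi_F=2\alpha_{\text{attn}}+6r$ and $\xi_W^{\text{all}}=\alpha_{\text{attn}}+3r/\rho$. Dividing $l\xi_F d^2=\bar F_p$ by $l\xi_W^{\text{all}}d^2b_w=M_{\text{budget}}$ gives
\[
\xi_F=\eta_p b_w\,\xi_W^{\text{all}},
\]
that is, $2\alpha_{\text{attn}}+6r=\eta_p b_w(\alpha_{\text{attn}}+3r/\rho)$. This equation is linear in $1/\rho$, and solving it gives \autoref{eq:rho-dual-prefill}. The positivity requirement $\rho^*>0$ for every $r$ is exactly $\eta_p b_w<2$. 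I would also check that $\rho^*\le 1$, since this determines when the regime is feasible.

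\textbf{Step 3, part (b).} Here the decode constraint contributes $\xi_W^{\text{dec}}=\alpha_{\text{attn}}+3r$ together with the KV term $2\bar S d\,b_{kv}/\text{gqa}$. Dividing the decode constraint by the memory constraint gives $(\alpha_{\text{attn}}+3r)+(\text{KV term})=\eta\,\xi_M$ with $\xi_M=\xi_W^{\text{all}}$. The KV term scales like $1/d$ rather than like $d^2$. To close the system I would eliminate $d$ (or $l$) using the active memory constraint. In the normalisation in which $\delta$ is defined, the KV correction should then appear as $\delta/\xi_M$, giving the quadratic
\[
\eta\,\xi_M^2-(\alpha_{\text{attn}}+3r)\,\xi_M-\delta=0 .
\]
Its positive root is the stated $\xi_M^*$; the other root is negative because $\eta\delta>0$. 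Inverting $\xi_M^*=\alpha_{\text{attn}}+3r/\rho^*$ then yields $\rho^*=3r/(\xi_M^*-\alpha_{\text{attn}})$.

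\textbf{Main obstacle.} I expect the hardest step to be Step 3, specifically getting the bookkeeping right so that the KV-cache contribution enters as $\delta/\xi_M$ rather than as an additive constant. An additive constant would make the relation linear, $\xi_M=(\alpha_{\text{attn}}+3r+\delta)/\eta$. I would first try substituting $d^2=M_{\text{budget}}/(l\,\xi_M b_w)$ into the KV term while holding the per-layer width scale fixed, using the definitions in \autoref{app:problem_formulation}. I would then verify that the two formulations agree to first order when $\delta$ is small, and finally confirm that $\xi_M^*>\alpha_{\text{attn}}$, so that $\rho^*\in(0,1]$ in the admissible range.
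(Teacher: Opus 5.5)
Your Step~3 cannot be completed. The difficulty you flag is not a bookkeeping issue: the target quadratic does not follow from the model. Dividing the decode constraint by the memory constraint gives, with no approximation,
\[
\frac{\alpha_{\text{attn}}+3r}{\xi_W^{\text{all}}}+\frac{2\bar{S}\,b_{kv}}{\mathrm{gqa}\cdot d\cdot b_w\,\xi_W^{\text{all}}}=\eta
\quad\Longleftrightarrow\quad
\eta\,\xi_W^{\text{all}}=\alpha_{\text{attn}}+3r+\delta .
\]
This holds because $\delta$ is defined with the actual width $d$. It is exactly the linear ``additive constant'' relation you were hoping to avoid.

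No legitimate manipulation makes the KV term enter as $\delta/\xi_M$:
\begin{itemize}
\item Substituting $d^2=M_{\text{budget}}/(l\,\xi_M b_w)$ while holding the width fixed is self-contradictory.
\item If you genuinely eliminate $d$ at fixed $l$, then $\delta\propto\sqrt{\xi_M}$. You get $\eta\,\xi_M-c\sqrt{\xi_M}-(\alpha_{\text{attn}}+3r)=0$, which is not the stated quadratic in $\xi_M$ either.
\end{itemize}

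The paper's own derivation (Case D3) reaches the quadratic only through an algebraic slip. After correctly obtaining the left-hand equation above, it rewrites the sum as $(\alpha_{\text{attn}}+3r+\delta/\xi_W^{\text{all}})/\xi_W^{\text{all}}$. That is, it divides the KV term by $\xi_W^{\text{all}}$ twice. The correct output of the ratio argument is
\[
\rho^*=\frac{3\eta r}{\alpha_{\text{attn}}(1-\eta)+3r+\delta},
\]
which coincides with the stated part (b) only in the limit $\delta\to0$ (the paper's ``special case'').

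Your planned first-order check would expose this rather than reconcile it. The stated root gives $\xi_M^*\approx(\alpha_{\text{attn}}+3r)/\eta+\delta/(\alpha_{\text{attn}}+3r)$. The exact relation gives $\xi_M^*=(\alpha_{\text{attn}}+3r)/\eta+\delta/\eta$. So trust your direct computation and conclude that (b) as printed is not provable, rather than steering the derivation toward it.

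Part (a) of your plan is correct and is essentially the paper's argument verbatim: take the ratio of the two active constraints, use $\xi_F=2(\alpha_{\text{attn}}+3r)$, and solve an equation linear in $1/\rho$. Note that $\eta_p b_w<2$ is best read as the condition $\rho^*<1$. From the ratio, $\eta_p b_w=\xi_F/\xi_W^{\text{all}}=2\xi_W^{\text{dec}}/\xi_W^{\text{all}}\le 2$, with equality iff $\rho=1$.

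Your Step~1 is unnecessary, since the theorem assumes both constraints are active. Its argument is also not airtight: driving $\rho$ to $\rho_{\min}$ need not make the memory constraint bind. If it does not, you are simply in the latency-only regime.
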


\begin{proof}[Proof Sketch]
Constraint compatibility requires $\xi_F / (b_w \xi_W^{\text{all}}) = \eta_p$ (prefill case) or the decode analog. Solving these algebraic equations yields the stated forms. The decode case involves a quadratic due to the KV-cache term.
\end{proof}

\textbf{Comparison.} The prefill+memory case admits a simple closed form, while decode+memory requires solving a quadratic due to KV-cache coupling. The key difference is that the decode constraint includes a term proportional to $\bar{S}/\text{gqa}$ absent in the prefill case, creating stronger coupling between gqa and the latency budget. For systems with tight coupling between latency and memory such as automotive SoCs, practitioners should compute the constraint ratio $\eta$ or $\eta_p$ and apply the corresponding formula, verifying that $\rho_{\min} \leq \rho^* \leq 1$.

\autoref{tab:theorem-proof-map} provides a cross-reference between the theoretical results presented in this section and their complete derivations in the appendices.

\begin{table}[h]
\centering
\caption{Cross-reference of theoretical results and detailed proofs.}
\label{tab:theorem-proof-map}
\small
\begin{tabular}{lll}
\toprule
\textbf{Result} & \textbf{Main Result} & \textbf{Detailed Proof} \\
\midrule
\autoref{thm:latency-only} & $\rho^* = \rho_{\min}$ & \autoref{app:D1} (D1), \autoref{app:P1} (P1) \\
\autoref{thm:memory-only}, \autoref{cor:width-sparsity} & $\rho^* \propto d^{(\beta_1-\beta_2)/\alpha_\rho}$ & \autoref{app:D2} (D2), \autoref{app:P2} (P2) \\
\autoref{eq:rho-dual-prefill} in~\autoref{thm:dual-constrained} & $\rho^*$ prefill+memory & \autoref{app:P3} (P3) \\
\autoref{eq:rho-dual-decode} in~\autoref{thm:dual-constrained} & $\rho^*$ decode+memory & \autoref{app:D3} (D3) \\
\autoref{eq:r-prefill}--\autoref{eq:gqa-prefill} & $r^*,\;\mathrm{gqa}^*$ closed forms & \autoref{app:summary} \\
\bottomrule
\end{tabular}
\end{table}

\subsection{Optimal Depth, FFN Ratio, and GQA}
\label{sec:theory:other}

The optimal depth always saturates the active constraint, taking the form $l^* = \bar{F}_p / (\xi_F d^2)$ for prefill, $l^* = \bar{M}_d / (\xi_W^{\text{eff}} d^2 b_w)$ for decode, or $l^* = M_{\text{budget}} / (\xi_W^{\text{all}} d^2 b_w)$ for memory constraints. This implies a fundamental depth-width trade-off: $l^* \propto d^{-2}$ at fixed budget, explaining the inverse scaling behavior observed along empirical Pareto frontiers.

The optimal FFN ratio and GQA ratio follow structured closed forms. We define the aggregate loss gradient $\tilde{D} \triangleq \kappa_\rho \rho^{\alpha_\rho} d^{\beta_2 - \beta_1} + \kappa_d$, which combines the sparsity and base capacity contributions. Under prefill latency constraints, the solutions are:
\begin{align}
    r^* &= \left[ \frac{\alpha_r \tilde{D}}{6 \alpha_l \kappa_l} \cdot \frac{\bar{F}_p^{\alpha_l}}{\xi_F^{\alpha_l - 1} d^{2\alpha_l + \beta_2}} \right]^{1/(\alpha_r + 1)} \label{eq:r-prefill} \\
    \mathrm{gqa}^* &= \left[ \frac{4 \alpha_l \kappa_l}{\alpha_m \kappa_m} \cdot \frac{\xi_F^{\alpha_l - 1} d^{2\alpha_l + \alpha_m}}{\bar{F}_p^{\alpha_l}} \right]^{1/(\alpha_m + 1)} \label{eq:gqa-prefill}
\end{align}
Other constraint regimes share the same structural form with modified coefficients and budget terms. Under decode latency constraints, the per-layer constraint coefficient generalizes to $\Gamma \triangleq \xi_W^{\mathrm{dec}} d^2 b_w + 2\bar{S} d b_{kv}/\mathrm{gqa}$, which accounts for KV-cache bandwidth; the $r^*$ coefficient changes from $1/6$ to $1/3$ (since $\partial \xi_W^{\mathrm{dec}}/\partial r = 3$ vs.\ $\partial \xi_F/\partial r = 6$), and the $\mathrm{gqa}^*$ formula acquires an additional KV-cache coupling factor $(d^2 b_w + \bar{S} d b_{kv})$. Under memory constraints, an extra $\rho^*$ factor enters the $r^*$ numerator. \autoref{tab:coefficients-summary} summarizes the key differences; complete formulas for all regimes are provided in~\autoref{app:summary}.

\begin{table}[h]
\centering
\caption{Coefficient comparison across constraint regimes. $C_r$ and $C_g$ denote the constraint-derivative coefficients for $r^*$ and $\mathrm{gqa}^*$, respectively.}
\label{tab:coefficients-summary}
\begin{tabular}{@{}lccccc@{}}
\toprule
\textbf{Regime} & \textbf{$\rho^*$} & \textbf{$C_r$} & \textbf{$C_g$} & \textbf{Budget} & \textbf{Constraint Coeff.} \\
\midrule
Prefill Latency & $\rho_{\min}$ & 6 & 4 & $\bar{F}_p$ & $\xi_F$ \\
Decode Latency & $\rho_{\min}$ & 3 & 2 & $\bar{M}_d$ & $\Gamma$ \\
Memory & \autoref{eq:rho-memory} & $3/\rho$ & 2 & $M_{\mathrm{budget}}$ & $\xi_W^{\mathrm{all}} d^2 b_w$ \\
\midrule
Prefill + Mem & \autoref{eq:rho-dual-prefill} & \multicolumn{4}{c}{see~\autoref{app:summary}} \\
Decode + Mem & \autoref{eq:rho-dual-decode} & \multicolumn{4}{c}{see~\autoref{app:summary}} \\
\bottomrule
\end{tabular}
\end{table}

The 2$\times$ difference in coefficients between prefill and decode (e.g., $C_r = 6$ vs.\ $3$) arises from the fundamental relation $\xi_F = 2\xi_W^{\mathrm{dec}}$, which reflects the FLOPs-to-memory-access ratio: each multiply-accumulate operation counts as 2 FLOPs but requires loading each weight parameter only once. Note that the decode $C_g = 2$ in~\autoref{tab:coefficients-summary} reflects only the weight-loading contribution; the full decode GQA derivative additionally includes a KV-cache correction term $2\bar{S} b_{kv}/(d \cdot b_w)$ (see~\autoref{app:D1} for the complete expression). This structural asymmetry has important practical implications: prefill-optimized models should use smaller FFN ratios and larger GQA values compared to decode-optimized models at equivalent performance levels.

\subsection{Design Principles and Practical Guidelines}
\label{sec:theory:insights}

\subsubsection{Key Structural Insights}

Four structural results emerge from the theoretical analysis. First, memory-constrained solutions exhibit scenario independence: the optimal parameters $(\rho^*, l^*, r^*, \text{gqa}^*)$ are identical for prefill and decode phases, since memory constraints concern model storage rather than inference dynamics. Second, prefill versus decode constraints induce a coefficient asymmetry, with 2$\times$ differences in FFN and GQA coefficients stemming from $\xi_F = 2\xi_W^{\text{dec}}$. Third, decode constraints exhibit KV-cache coupling through a term proportional to $\bar{S}/\text{gqa}$ that creates sequence-length dependence absent in prefill constraints. Fourth, the width-sparsity scaling law $\rho^* \propto d^{-1.19}$ implies that doubling model width should reduce activation rate by approximately $2.3\times$, providing a principled basis for allocating sparsity in memory-limited deployments.

\subsubsection{Actionable Design Guidelines}

\paragraph{Sparsity Allocation Strategy.}
The optimal sparsity allocation depends critically on the active constraint regime. For latency-bound systems, practitioners should maximize sparsity by setting $\rho = \rho_{\min}$ (typically top-1 routing with $K=1$). For memory-bound systems, the width-sparsity scaling law (\autoref{eq:rho-memory}) provides the principled allocation: wider models require sparser MoE configurations to balance capacity gains against storage costs. For dual-constrained systems, the regime-specific formulas (\autoref{eq:rho-dual-prefill} or~\autoref{eq:rho-dual-decode}) should be applied after computing the constraint ratio $\eta$ or $\eta_p$ to determine which formula is appropriate.

\paragraph{Depth-First Budget Allocation.}
The $l^* \propto d^{-2}$ relationship suggests a systematic allocation strategy. Practitioners should first select a target width $d$ based on the parameter budget and width-sparsity law, then compute optimal depth $l^*$ to saturate the active constraint using the appropriate formula from~\autoref{sec:theory:other}. If the computed $l^*$ exceeds the architectural search space upper bound (e.g., $l > 48$ layers), the width $d$ should be reduced iteratively until a feasible depth is obtained. This depth-first strategy aligns with the empirical observation that depth grows monotonically along Pareto frontiers until reaching architectural limits.

\paragraph{Phase-Aware Parameter Tuning.}
The coefficient asymmetries in~\autoref{tab:coefficients-summary} translate directly into optimization strategies. For prefill-dominant workloads such as long-context question answering, models should employ smaller FFN ratios (exploiting the $1/6$ coefficient versus $1/3$ in decode) and larger GQA values (more KV heads) to amortize projection costs, while ignoring KV-cache overhead in the optimization. Conversely, for decode-dominant workloads such as chatbots and code generation, models should use larger FFN ratios and carefully balance GQA against KV-cache bandwidth, as the decode $\mathrm{gqa}^*$ formula includes a sequence-length-dependent KV-cache correction (see~\autoref{app:summary}). For balanced workloads with mixed prefill and decode phases, practitioners should optimize for total end-to-end latency or defer to memory-constrained formulas if storage is the primary limitation.

\paragraph{Generalization to New Hardware Platforms.}
One primary motivation for this theoretical framework is enabling efficient architecture search on new hardware platforms without repeating exhaustive empirical evaluation. For a new platform with specifications $(\pi_H, \beta_H, M_{\text{budget}})$, the deployment workflow proceeds as follows. First, measure the hardware parameters: peak compute $\pi_H$, sustained memory bandwidth $\beta_H$, and available memory $M_{\text{budget}}$. Second, define application requirements including target latency budgets $T_{\text{lat}}^{\text{pre}}, T_{\text{lat}}^{\text{dec}}$ and workload configuration $(B, S_{\text{in}}, S_{\text{out}})$. Third, compute normalized budgets $\bar{F}_p = T_{\text{lat}}^{\text{pre}} \cdot \pi_H / (BS_{\text{in}})$ and $\bar{M}_d = T_{\text{lat}}^{\text{dec}} \cdot \beta_H / S_{\text{out}}$. Fourth, determine the active constraint regime by computing ratios $\eta_p = \bar{F}_p / M_{\text{budget}}$ and $\eta = \bar{M}_d / M_{\text{budget}}$: if either ratio is much less than 1, the system is memory-constrained; if much greater than 1, it is latency-constrained; otherwise, it is dual-constrained. Fifth, apply the corresponding theorem to predict optimal parameters $\bm{\theta}^*$ and round to feasible discrete values. Finally, validate predictions with 3--5 small-scale training runs (1--2B tokens each) to measure actual latency and refine if systematic bias is observed.

This workflow reduces architecture selection time from months (full empirical search) to under one week (theoretical prediction plus small-scale validation), as demonstrated in our deployment case study. As a concrete example, consider deploying on a new edge device with 10~TOPS compute, 50~GB/s bandwidth, 4~GB memory, and a target decode latency below 100ms for single-token generation. Computing $\bar{M}_d = 0.1 \times 50 / 10 = 0.5$~GB and the ratio $\eta = 0.5 / 4 = 0.125 < 1$ identifies this as a memory-constrained regime. Applying ~\autoref{thm:memory-only} for width $d = 1024$ predicts $\rho^* \approx 0.20$. Training a targeted 20-layer, 1024-width MoE model with $K=2, E=10$ validates these predictions without evaluating thousands of candidate architectures.

\subsubsection{Limitations and Future Extensions}

The theoretical framework rests on three key assumptions that bound its applicability. First, the fitted loss scaling law (\autoref{eq:loss}) is based on \trainarch{} architectures trained for 10B tokens; extrapolation to significantly different training budgets or data distributions may reduce prediction accuracy and should be validated empirically. Second, the latency model assumes idealized roofline behavior (\autoref{eq:roofline}), whereas real systems exhibit kernel launch overhead, cache effects, and operator fusion that may cause deviations of 10--20\% from theoretical predictions. Third, the framework assumes standard transformer components including attention, FFN, and MoE; extensions to hybrid architectures such as SSM-Transformer blends or linear attention mechanisms would require re-deriving constraint forms and may exhibit qualitatively different scaling behaviors.

Future work could address these limitations by incorporating training dynamics (learning rate schedules, optimizer state) into the loss model, developing more refined latency models that account for operator fusion and system-level effects, extending the theory to hybrid architectures and emerging attention mechanisms, and validating the framework across a broader range of hardware platforms including TPUs and specialized AI accelerators. Nonetheless, the current framework represents a significant advance toward principled, hardware-aware LLM architecture design grounded in explicit optimization theory rather than pure empirical search.

\subsection{Summary}

This section developed a comprehensive theoretical framework for hardware-aware architecture optimization. \autoref{thm:latency-only} through~\autoref{thm:dual-constrained} characterize optimal activation rates $\rho^*$ across latency-only, memory-only, and dual-constrained regimes, revealing that different hardware constraints induce qualitatively different optimal solutions. The width-sparsity scaling law (\autoref{cor:width-sparsity}) establishes that $\rho^* \propto d^{-1.19}$, providing a principled basis for allocating sparsity in memory-constrained settings. Optimal depth, FFN ratio, and GQA configurations expose structural asymmetries between prefill-optimized and decode-optimized architectures, with 2$\times$ coefficient differences arising from the fundamental FLOPs-to-memory-access ratio. The derived design principles enable rapid architecture selection on new hardware platforms, reducing deployment time from months to under one week. The theoretical predictions align closely with empirical Pareto frontiers discovered in~\autoref{sec:pareto}, validating the framework while providing deeper insight into why certain architectural patterns emerge as optimal. Complete proofs and detailed parameter formulas are provided in~\autoref{app:problem_formulation} through~\autoref{app:summary}.

\section{Conclusion}

We present a hardware-aligned framework that connects transformer architecture to model quality and end-to-end inference efficiency via equivalent-parameter scaling and hardware-aware latency modeling. Evaluating \obsarch{} architectures, we identify clear structural principles and Pareto-optimal regimes that govern the trade-off between loss, latency, and roofline efficiency.
Our results show that effective LLM deployment on edge and embedded accelerators requires explicit hardware-model co-design. Beyond heuristic selection, the proposed hardware co-design scaling law reveals a stable and predictable relationship between architecture and hardware constraints, enabling principled extrapolation of optimal designs across deployment regimes.
\newpage

\bibliographystyle{unsrt}
\bibliography{main}

\newpage
\addtocontents{toc}{\protect\setcounter{tocdepth}{1}}
\appendix

\section{Architecture Search Space}
\label{app:search_space}

\autoref{tab:scaling_space_app} specifies the architectural hyperparameters explored in our scaling law study.

\begin{table}[h]
\caption{Architecture search space for scaling law fitting.}
\label{tab:scaling_space_app}
\small
\centering
\begin{tabular}{ll}
\toprule
Parameter & Values \\                                                                              
\midrule
Depth $l$ & $\{4, 8, 12, 16, 20, 24, 28, 32\}$ \\
Width $d$ & $\{768, 1024, 1280, 1536, 1792, 2048, 2304, 2560, 3072\}$ \\
MoE $(E, K)$ & $\{(1,1), (8,1), (8,2), (16,1), (16,2)\}$ \\
GQA $n_{kv}$ & $\{1, 2, 4, 8, n_h\}$ \\
\bottomrule
\end{tabular}
\end{table}

The search space jointly covers depth (4–32 layers), width (768–3072 hidden dimensions), MoE configurations (dense to 16 experts with Top-1/Top-2 routing), and grouped-query attention settings. This design ensures coverage of both dense and sparse architectures while avoiding degenerate or ill-conditioned regimes.

\section{Pre-training Details}
\label{app:pretrain_details}

All \trainarch{} model configurations are trained under identical conditions to ensure fair comparison:

\paragraph{Training Data.} Each model is trained on 10B tokens from a mixture of general corpus, mathematics, and code data. This budget is sufficient to observe scaling behavior while remaining computationally tractable. The training corpus will be released upon publication.

\paragraph{Optimization.} We use the AdamW optimizer with $\beta_1 = 0.9$, $\beta_2 = 0.95$, and weight decay $0.01$. The learning rate follows a cosine decay schedule from $1 \times 10^{-4}$ to $1 \times 10^{-6}$, with linear warmup over the first 0.2\% of training steps. QK-Norm is applied to stabilize MoE pre-training. All experiments use a global batch size of 256.

\paragraph{Evaluation.} Model performance is measured by validation loss on a held-out set of approximately 1B tokens, averaged over the final 1,000 optimization steps to reduce variance. We additionally report perplexity on WikiText-2 for downstream evaluation.

\paragraph{Computational Cost.} Each configuration is trained on 8 NVIDIA H200 GPUs for approximately 10 hours, totaling $\trainarch{} \times 8 \times 10 = 13{,}600$ GPU-hours for the full study.


\section{Scaling Law Coefficients}
\label{app:scaling_coefficients}

The fitted parametric loss model takes the following form:

\begin{equation}
\label{eq:fitting_obj_app}
\hat{\mathcal{L}}(\bm{\theta})
=
\underbrace{\frac{\kappa_l}{l^{\alpha_l}}}_{\text{depth}}
+
\underbrace{\frac{\kappa_\rho \rho^{\alpha_\rho}}{r^{\alpha_r} d^{\beta_1}}}_{\text{sparsity-width}}
+
\underbrace{\frac{\kappa_d}{r^{\alpha_r} d^{\beta_2}}}_{\text{capacity}}
+
\underbrace{\frac{\kappa_m}{d_m^{\alpha_m}}}_{\text{KV-cache}}
+ \mathcal{L}_\infty
\end{equation}

\autoref{tab:fitted_params_app} reports the fitted coefficients. The resulting concrete form is:

\begin{equation}
\hat{\mathcal{L}}(\bm{\theta})
=
\frac{9.96}{l^{1.63}}
+
\frac{0.031\,\rho^{1.09}}{r^{0.17}\,d^{-0.33}}
+
\frac{500}{r^{0.17}\,d^{0.97}}
+
\frac{0.20}{d_m^{0.05}}
+ 2.53
\end{equation}

\begin{table}[h]
\centering
\caption{Fitted scaling law coefficients.}
\label{tab:fitted_params_app}
\small
\begin{tabular}{lccl}
\toprule
Term & Coefficient & Exponent & Interpretation \\
\midrule
Depth & $\kappa_l = 9.96$ & $\alpha_l = 1.63$ & Strong depth dependence \\
Sparsity & $\kappa_\rho = 0.031$ & $\alpha_\rho = 1.09$, $\beta_1 = -0.33$ & Width-sparsity coupling \\
Capacity & $\kappa_d = 500$ & $\beta_2 = 0.97$ & Base capacity scaling \\
FFN ratio & -- & $\alpha_r = 0.17$ & Unified FFN scaling \\
KV-cache & $\kappa_m = 0.20$ & $\alpha_m = 0.05$ & Cache efficiency \\
\midrule
Irreducible & \multicolumn{3}{c}{$\mathcal{L}_\infty = 2.53$} \\
\bottomrule
\end{tabular}
\end{table}

The exponents reveal several insights: (1) depth exhibits the strongest scaling ($\alpha_l = 1.63$), indicating high sensitivity to layer count; (2) width and sparsity are coupled ($\beta_1 = -0.33$): the sparsity-related loss penalty grows with width, implying that wider models require greater sparsity (lower $\rho$) to remain competitive; (3) FFN expansion ratio contributes modestly ($\alpha_r = 0.17$); and (4) KV-cache configuration has minimal impact on loss ($\alpha_m = 0.05$) but significantly affects inference efficiency.


\section{Latency Modeling Details}
\label{app:latency_details}

\paragraph{Roofline-Based Prediction.} We analytically model latency using hardware roofline analysis. Each operator is classified as compute-bound or memory-bound based on its arithmetic intensity $I = \text{FLOPs} / \text{Bytes}$ relative to the hardware's compute-to-bandwidth ratio. Latency is estimated as:
\begin{equation}
T_{\text{op}} = \max\left(\frac{\text{FLOPs}}{\text{Peak Compute}}, \frac{\text{Memory Access}}{\text{Peak Bandwidth}}\right)
\end{equation}
This approach enables rapid evaluation of over 50,000 configurations in minutes, making it suitable for large-scale architecture exploration.

\paragraph{Empirical Validation.} We validate roofline predictions using vLLM with subprocess isolation to ensure accurate GPU memory accounting. Top Pareto candidates identified by analytical modeling are measured empirically to confirm their optimality.

\paragraph{Latency Scaling Behavior.} Inference latency varies with workload parameters:
\begin{itemize}[leftmargin=*, itemsep=2pt]
    \item \textbf{Sequence length:} Prefill latency scales quadratically with input length due to attention ($O(S^2)$); decode latency scales linearly due to KV-cache access ($O(S)$).
    \item \textbf{Output length:} Total latency is dominated by decode for long generations, making decode optimization critical for conversational applications.
    \item \textbf{Batch size:} Both phases benefit from batching with diminishing returns. Prefill becomes compute-bound at larger batches; decode remains memory-bound due to weight loading. For on-vehicle deployment, we focus on batch size 1.
\end{itemize}

\paragraph{Scenario-Specific Targets.} The appropriate latency optimization target depends on deployment scenario:
\begin{itemize}[leftmargin=*, itemsep=2pt]
    \item \textit{Interactive/streaming applications} (chatbots, speculative decoding): optimize decode latency.
    \item \textit{Long-context processing} (document QA): optimize prefill latency.
    \item \textit{Balanced workloads} (summarization): optimize total latency.
\end{itemize}
\section{Problem Formulation and Roofline Analysis}
\label{app:problem_formulation}

This section presents a comprehensive roofline analysis for decoder-only Transformers, deriving FLOPs, memory traffic, and latency models for both prefill and decode phases.

\subsection{Notation}

\begin{table}[h]
\caption{Symbol definitions for roofline analysis.}
\label{tab:notation}
\centering
\small
\begin{tabular}{cl|cl}
\toprule
\textbf{Symbol} & \textbf{Definition} & \textbf{Symbol} & \textbf{Definition} \\
\midrule
$B$ & Batch size & $S$ & Sequence length \\
$l$ & Number of layers & $d$ & Hidden dimension \\
$d_h$ & Head dimension & $n_h$ & Number of query heads \\
$n_{kv}$ & Number of KV heads & $\mathrm{gqa}$ & GQA ratio $= n_h / n_{kv}$ \\
$d_m$ & KV dimension $= d/\mathrm{gqa}$ & $r$ & FFN expansion ratio \\
$E$ & Total experts & $K$ & Active experts per token \\
$\rho$ & Activation rate $= K/E$ & $b_w$ & Bytes per weight \\
$b_a$ & Bytes per activation & $b_{kv}$ & Bytes per KV element \\
$\pi_H$ & Peak compute (FLOP/s) & $\beta_H$ & Memory bandwidth (Bytes/s) \\
$S_{\mathrm{in}}$ & Input sequence length & $S_{\mathrm{out}}$ & Output sequence length \\
\bottomrule
\end{tabular}
\end{table}

\subsection{Transformer Architecture}
\label{sec:architecture}

We consider a decoder-only Transformer with the following components per layer:

\paragraph{Multi-Head Attention.}
The attention mechanism uses $n_h$ query heads and $n_{kv}$ key-value heads (Grouped Query Attention). The hidden dimension is $d$, and the GQA ratio is $\mathrm{gqa} = n_h / n_{kv}$. The projection matrices are:
\begin{itemize}
    \item Query projection $W_Q \in \mathbb{R}^{d \times d}$
    \item Key projection $W_K \in \mathbb{R}^{d \times d/\mathrm{gqa}}$
    \item Value projection $W_V \in \mathbb{R}^{d \times d/\mathrm{gqa}}$
    \item Output projection $W_O \in \mathbb{R}^{d \times d}$
\end{itemize}

\paragraph{Feed-Forward Network.}
We consider a gated FFN (SwiGLU) with expansion ratio $r = d_{\mathrm{ffn}} / d$:
\begin{itemize}
    \item Up projection $W_{\mathrm{up}} \in \mathbb{R}^{d \times rd}$
    \item Gate projection $W_{\mathrm{gate}} \in \mathbb{R}^{d \times rd}$
    \item Down projection $W_{\mathrm{down}} \in \mathbb{R}^{rd \times d}$
\end{itemize}

\paragraph{Mixture of Experts.}
In an MoE layer, the FFN is replicated into $E$ experts, of which $K$ are activated per token. The activation rate is $\rho = K / E$.

\subsection{Per-Operator Analysis}
\label{sec:per_operator}

\subsubsection{Attention Projection Layers}

Multi-head attention uses four linear projections: Query (Q), Key (K), Value (V), and Output (O). Modern architectures employ Grouped-Query Attention (GQA)~\citep{ainslie2023gqa} to reduce K/V dimensions by a factor of \texttt{gqa}.

\paragraph{Q Projection.} 
Transform input $X \in \mathbb{R}^{B \times S \times d}$ to queries $Q \in \mathbb{R}^{B \times S \times d}$ via weight $W_Q \in \mathbb{R}^{d \times d}$.

\begin{table}[h]
\centering
\caption{Q projection costs in prefill and decode phases.}
\label{tab:q_projection_metrics}
\small
\begin{tabular}{lcc}
\toprule
\textbf{Metric} & \textbf{Prefill} & \textbf{Decode} ($S_q = 1$) \\
\midrule
FLOPs & $2BSd^2$ & $2Bd^2$ \\
Weight Load & $d^2 \cdot b_w$ & $d^2 \cdot b_w$ \\
Activation Load & $BSd \cdot b_a$ & $Bd \cdot b_a$ \\
Activation Store & $BSd \cdot b_a$ & $Bd \cdot b_a$ \\
\bottomrule
\end{tabular}
\end{table}

\paragraph{Arithmetic Intensity.} 
From~\autoref{tab:q_projection_metrics}:
\begin{itemize}
    \item \textbf{Prefill}: $\mathcal{I} \approx \frac{2BS}{b_w}$, typically compute-bound for large $BS$.
    \item \textbf{Decode}: $\mathcal{I} \approx \frac{2B}{b_w}$, typically memory-bound for small $B$.
\end{itemize}

\paragraph{K and V Projections (GQA).} 
Project to reduced dimension $d_m = d/\text{gqa}$: $X \in \mathbb{R}^{B \times S \times d} \rightarrow K, V \in \mathbb{R}^{B \times S \times d_m}$ via $W_K, W_V \in \mathbb{R}^{d \times d_m}$.

\begin{table}[h]
\centering
\caption{K and V projection costs with GQA. Each operates on dimension $d_m = d/\text{gqa}$.}
\label{tab:kv_projection_metrics}
\small
\begin{tabular}{lcc}
\toprule
\textbf{Metric (each)} & \textbf{Prefill} & \textbf{Decode} \\
\midrule
FLOPs & $2BSd^2/\text{gqa}$ & $2Bd^2/\text{gqa}$ \\
Weight Load & $d^2b_w/\text{gqa}$ & $d^2b_w/\text{gqa}$ \\
KV Cache Store & $BSd \cdot b_{kv}/\text{gqa}$ & $Bd \cdot b_{kv}/\text{gqa}$ \\
\bottomrule
\end{tabular}
\end{table}

GQA reduces K/V computation, weight memory, and critically KV-cache storage by factor \texttt{gqa} compared to Q/O projections.

\paragraph{O Projection.} 
Identical to Q projection (see~\autoref{tab:q_projection_metrics}).

\paragraph{Summary.} 
Total costs for Q, K, V, O projections:
\begin{align}
\text{FLOPs} &= 2BSd^2 \left(2 + \frac{2}{\text{gqa}}\right) \label{eq:attn_proj_flops} \\
\text{Weight Memory} &= d^2b_w \left(2 + \frac{2}{\text{gqa}}\right) \label{eq:attn_proj_weight}
\end{align}

For $\text{gqa}=1$ (standard MHA): $8BSd^2$ FLOPs, $4d^2b_w$ memory. For $\text{gqa}=8$: $\sim$44\% reduction.

\subsubsection{Attention Score Computation}

The attention mechanism computes scores through three main operations: query-key multiplication, softmax normalization, and score-value multiplication. We analyze each operation separately.

\paragraph{QK Matmul (Query-Key Attention Scores).}
For each attention head, compute attention scores by multiplying query $Q_h \in \mathbb{R}^{S_q \times d_h}$ with key transpose $K_h^T \in \mathbb{R}^{d_h \times S_{kv}}$, producing scores $\in \mathbb{R}^{S_q \times S_{kv}}$.

\begin{table}[h]
\centering
\caption{Query-key attention score computation costs. Note $d_m = d/\text{gqa}$ is the reduced dimension for K cache with grouped-query attention.}
\label{tab:qk_matmul}
\small
\begin{tabular}{lcc}
\toprule
\textbf{Metric} & \textbf{Prefill} ($S_q = S_{kv} = S$) & \textbf{Decode} ($S_q = 1$) \\
\midrule
FLOPs & $2BS^2 d$ & $2BSd$ \\
Load Q & $BSd \cdot b_a$ & $Bd \cdot b_a$ \\
Load K Cache & $BSd_m \cdot b_{kv}$ & $BSd_m \cdot b_{kv}$ \\
Store Scores & $B n_h S^2 \cdot b_a$ & $B n_h S \cdot b_a$ \\
\bottomrule
\end{tabular}
\end{table}

Prefill computes a full $S \times S$ attention matrix per head ($O(S^2)$ complexity), while decode only computes scores for one new token against $S$ cached keys ($O(S)$ complexity).

\paragraph{Softmax Normalization.}
Attention scores are normalized using softmax, requiring approximately 5 operations per element: finding maximum (for numerical stability), subtraction, exponentiation, summation, and division.

\begin{table}[h]
\centering
\caption{Softmax normalization costs over $B$ batches, $n_h$ heads, each processing $S_q \times S_{kv}$ scores.}
\label{tab:softmax}
\small
\begin{tabular}{lcc}
\toprule
\textbf{Metric} & \textbf{Prefill} ($S_q = S$) & \textbf{Decode} ($S_q = 1$) \\
\midrule
FLOPs & $\approx 5 B n_h S^2$ & $\approx 5 B n_h S$ \\
Memory Load & $B n_h S^2 \cdot b_a$ & $B n_h S \cdot b_a$ \\
Memory Store & $B n_h S^2 \cdot b_a$ & $B n_h S \cdot b_a$ \\
\bottomrule
\end{tabular}
\end{table}

Softmax FLOPs are typically negligible compared to matrix multiplications, contributing $O(n_h S^2)$ versus $O(Sd^2)$ for QK matmul when $d \gg n_h$.

\paragraph{Score-Value Matmul (Weighted Aggregation).}
Normalized attention scores $\in \mathbb{R}^{S_q \times S_{kv}}$ multiply value matrix $V_h \in \mathbb{R}^{S_{kv} \times d_h}$ to produce attention output $\in \mathbb{R}^{S_q \times d_h}$ per head.

\begin{table}[h]
\centering
\caption{Score-value multiplication costs for weighted value aggregation.}
\label{tab:score_v_matmul}
\small
\begin{tabular}{lcc}
\toprule
\textbf{Metric} & \textbf{Prefill} ($S_q = S$) & \textbf{Decode} ($S_q = 1$) \\
\midrule
FLOPs & $2BS^2 d$ & $2BSd$ \\
Load Scores & $B n_h S^2 \cdot b_a$ & $B n_h S \cdot b_a$ \\
Load V Cache & $BSd_m \cdot b_{kv}$ & $BSd_m \cdot b_{kv}$ \\
Store Output & $BSd \cdot b_a$ & $Bd \cdot b_a$ \\
\bottomrule
\end{tabular}
\end{table}

Like QK matmul, score-value multiplication exhibits $O(S^2)$ complexity in prefill and $O(S)$ in decode.

\paragraph{Summary.}
Combining all three operations (QK matmul, softmax, score-value matmul):

\begin{table}[h]
\centering
\caption{Total computational and memory costs for attention score computation.}
\label{tab:attention_score_summary}
\small
\begin{tabular}{lcc}
\toprule
\textbf{Metric} & \textbf{Prefill} ($S_q = S$) & \textbf{Decode} ($S_q = 1$) \\
\midrule
Total FLOPs & $4BS^2 d + O(Bn_h S^2)$ & $4BSd + O(Bn_h S)$ \\
KV Cache Access & $2BSd_m \cdot b_{kv}$ & $2BSd_m \cdot b_{kv}$ \\
\bottomrule
\end{tabular}
\end{table}

\paragraph{Key Observations.}
From~\autoref{tab:attention_score_summary}:
\begin{itemize}
    \item \textbf{Quadratic vs. Linear Scaling:} Prefill attention scales as $O(S^2)$ due to full $S \times S$ attention matrix, while decode scales linearly as $O(S)$ since only one new token attends to all previous tokens.
    
    \item \textbf{Projection Dominance:} For typical configurations where $d \gg S$ (e.g., $d=4096$, $S=2048$), projection FLOPs ($O(Bd^2)$) dominate over attention score FLOPs ($O(BSd)$), especially in decode phase.
    
    \item \textbf{KV Cache Bottleneck:} KV cache access remains $O(BSd_m)$ in both phases, becoming a critical bottleneck in decode when sequence length $S$ is large.
\end{itemize}

\subsubsection{FFN Layers}

Modern transformers use either dense FFN layers or sparse Mixture-of-Experts (MoE) FFN layers.

\paragraph{Dense FFN with SwiGLU Activation.}
SwiGLU~\cite{shazeer2020glu} uses three linear projections with gated activation:
\begin{enumerate}
    \item \textbf{Gate projection}: $X \in \mathbb{R}^{B \times S \times d} \xrightarrow{W_g} G \in \mathbb{R}^{B \times S \times rd}$
    \item \textbf{Up projection}: $X \in \mathbb{R}^{B \times S \times d} \xrightarrow{W_u} U \in \mathbb{R}^{B \times S \times rd}$
    \item \textbf{Gated activation}: $H = \text{SiLU}(G) \odot U$ (element-wise)
    \item \textbf{Down projection}: $H \in \mathbb{R}^{B \times S \times rd} \xrightarrow{W_d} Y \in \mathbb{R}^{B \times S \times d}$
\end{enumerate}
where $r$ is the expansion ratio (typically $r = \frac{8}{3}$ or $r = 4$).

\begin{table}[h]
\centering
\caption{SwiGLU FFN costs. Element-wise ops contribute negligible $O(BSrd)$ FLOPs vs $O(BSrd^2)$ for projections.}
\label{tab:dense_ffn}
\small
\begin{tabular}{lcc}
\toprule
\textbf{Component} & \textbf{FLOPs (Prefill/Decode)} & \textbf{Weight Memory} \\
\midrule
Gate ($W_g \in \mathbb{R}^{d \times rd}$) & $2BSrd^2$ / $2Brd^2$ & $rd^2 \cdot b_w$ \\
Up ($W_u \in \mathbb{R}^{d \times rd}$) & $2BSrd^2$ / $2Brd^2$ & $rd^2 \cdot b_w$ \\
Down ($W_d \in \mathbb{R}^{rd \times d}$) & $2BSrd^2$ / $2Brd^2$ & $rd^2 \cdot b_w$ \\
\midrule
\textbf{Total (3 projections)} & $6BSrd^2$ / $6Brd^2$ & $3rd^2 \cdot b_w$ \\
\bottomrule
\end{tabular}
\end{table}

\paragraph{Mixture-of-Experts (MoE) FFN.}
MoE~\citep{shazeer2017outrageously,fedus2022switch} replaces dense FFN with $E$ parallel experts, activating only top-$K$ per token via learned routing, decoupling computation from capacity.

\textbf{Key parameters:}
$E$ = total experts; $K$ = active experts/token (typically $K=2$); $\rho = K/E$ = activation rate; $r_{\text{single}}$ = per-expert expansion; $r = K \cdot r_{\text{single}}$ = effective expansion.

\textbf{Costs:} FLOPs match dense FFN with same effective $r$, but all $E$ experts must be stored:
\begin{align}
\text{MoE FLOPs} &= 6BSrd^2 \text{ (prefill)}, \quad 6Brd^2 \text{ (decode)} \\
\text{MoE Weight Memory} &= \frac{3rd^2 b_w}{\rho} = E \cdot 3 r_{\text{single}} d^2 b_w
\end{align}

\begin{table}[h]
\centering
\caption{Dense vs MoE FFN comparison. MoE achieves $1/\rho$ more capacity at same FLOPs.}
\label{tab:ffn_comparison}
\small
\begin{tabular}{l|cc|c}
\toprule
& \multicolumn{2}{c|}{\textbf{FLOPs}} & \textbf{Weight Memory} \\
\textbf{Architecture} & Prefill & Decode & Total \\
\midrule
Dense FFN & $6BSrd^2$ & $6Brd^2$ & $3rd^2 b_w$ \\
MoE FFN & $6BSrd^2$ & $6Brd^2$ & $\dfrac{3rd^2 b_w}{\rho}$ \\
\midrule
\textbf{Ratio (MoE/Dense)} & $1\times$ & $1\times$ & $E/K$ \\
\bottomrule
\end{tabular}
\end{table}

\paragraph{Key Observations.}
\begin{itemize}
    \item \textbf{Capacity-Computation Decoupling:} MoE scales parameters without increasing FLOPs. Example: $E=8$, $K=2$ ($\rho=0.25$) yields $4\times$ more parameters at same compute.
    
    \item \textbf{Memory-Bound Regime:} In decode with small $B$, MoE's $1/\rho$ larger weight memory exacerbates bottlenecks, requiring quantization.
    
    \item \textbf{FFN Dominance:} FFN accounts for $\sim$60-70\% of total FLOPs when $r \approx 4$ (vs attention projection costs in~\autoref{eq:attn_proj_flops} and~\autoref{eq:attn_proj_weight}).
\end{itemize}

\subsection{Per-Layer Coefficient Summary}
\label{sec:coefficients}

Combining attention projections and FFN, the per-layer costs are:
\begin{align}
\text{FLOPs per layer} &= \underbrace{2BSd^2 \left(2 + \frac{2}{\mathrm{gqa}}\right)}_{\text{Attention Proj.}} + \underbrace{6BSrd^2}_{\text{FFN}} + \underbrace{O(BS^2 d)}_{\text{Attn Score}} \\
\text{Weight (compute)} &= \underbrace{d^2 b_w \left(2 + \frac{2}{\mathrm{gqa}}\right)}_{\text{Attention Proj.}} + \underbrace{3rd^2 b_w}_{\text{FFN (active)}} \\
\text{Weight (storage)} &= \underbrace{d^2 b_w \left(2 + \frac{2}{\mathrm{gqa}}\right)}_{\text{Attention Proj.}} + \underbrace{\frac{3rd^2 b_w}{\rho}}_{\text{FFN (all experts)}}
\end{align}

For large $d$ where projection FLOPs dominate, we define the normalized coefficients:
\begin{align}
\xi_F &= 4 + \frac{4}{\mathrm{gqa}} + 6r & \text{(FLOPs coefficient)} \label{eq:xi_F} \\
\xi_W^{\mathrm{dec}} &= 2 + \frac{2}{\mathrm{gqa}} + 3r & \text{(Decode weight coefficient)} \label{eq:xi_W_dec} \\
\xi_W^{\mathrm{all}} &= 2 + \frac{2}{\mathrm{gqa}} + \frac{3r}{\rho} & \text{(Storage coefficient)} \label{eq:xi_W_all}
\end{align}

\textbf{Key Relations}:
\begin{enumerate}[leftmargin=*, itemsep=2pt]
    \item $\xi_F = 2 \cdot \xi_W^{\mathrm{dec}}$ for any $(r, \mathrm{gqa})$. This factor of 2 arises because each multiply-accumulate operation counts as 2 FLOPs but requires loading each weight only once.
    \item $\rho$ appears only in $\xi_W^{\mathrm{all}}$, not in $\xi_F$ or $\xi_W^{\mathrm{dec}}$. This reflects that sparsity affects storage but not per-token computation.
    \item For dense models ($\rho = 1$): $\xi_W^{\mathrm{all}} = \xi_W^{\mathrm{dec}}$.
\end{enumerate}

\subsection{Inference Phases and Roofline Model}
\label{sec:inference_phases}

Autoregressive generation consists of two phases:

\paragraph{Prefill Phase.}
The model processes $S_{\mathrm{in}}$ input tokens in parallel. This phase is typically \textbf{compute-bound}.

\paragraph{Decode Phase.}
The model generates $S_{\mathrm{out}}$ tokens autoregressively. At step $t$, the KV-cache contains $S_{\mathrm{in}} + t$ entries. This phase is typically \textbf{memory-bandwidth-bound}.

\paragraph{Roofline Model.}
The roofline model~\citep{williams2009roofline} characterizes kernel latency as:
\begin{equation}
T = \max\left(\frac{\mathcal{F}}{\pi_H}, \frac{\mathcal{W}}{\beta_H}\right)
\end{equation}
where $\mathcal{F}$ is FLOPs, $\mathcal{W}$ is memory traffic, $\pi_H$ is peak compute, and $\beta_H$ is memory bandwidth.

\subsection{Latency Modeling}
\label{sec:latency_modeling_app}

\subsubsection{Prefill Latency}

For batch size $B$ and input sequence length $S_{\mathrm{in}}$, the total FLOPs per layer is:
\begin{equation}
\mathcal{F}_{\mathrm{layer}} = BS_{\mathrm{in}} d^2 \cdot \xi_F
\end{equation}

With large batch-sequence product $BS_{\mathrm{in}}$, prefill is typically compute-bound:
\begin{equation}
T_{\mathrm{pre}} = \frac{l \cdot BS_{\mathrm{in}} d^2 \cdot \xi_F}{\pi_H}
\label{eq:prefill_latency}
\end{equation}

\subsubsection{Decode Latency: Single Step Analysis}

At decode step $t$, processing $B$ tokens with context length $S_{\mathrm{in}} + t$:

\paragraph{Memory Traffic per Step.}

\textbf{(1) Weight loading:}
\begin{equation}
\mathcal{W}_{\mathrm{weight}} = \xi_W^{\mathrm{dec}} \cdot d^2 \cdot b_w
\end{equation}

\textbf{(2) KV-cache loading:}
\begin{equation}
\mathcal{W}_{\mathrm{KV}}(t) = \frac{2(S_{\mathrm{in}} + t) \cdot d \cdot b_{kv}}{\mathrm{gqa}}
\label{eq:kv_traffic}
\end{equation}

\textbf{Total per layer:}
\begin{equation}
\mathcal{W}_{\mathrm{layer}}(t) = \xi_W^{\mathrm{dec}} \cdot d^2 \cdot b_w + \frac{2(S_{\mathrm{in}} + t) \cdot d \cdot b_{kv}}{\mathrm{gqa}}
\end{equation}

\paragraph{Single Step Latency.}
With small batch $B$ (often $B = 1$), decode is typically memory-bound:
\begin{equation}
T_{\mathrm{step}}(t) = \frac{l}{\beta_H} \left[\xi_W^{\mathrm{dec}} d^2 b_w + \frac{2(S_{\mathrm{in}} + t) d b_{kv}}{\mathrm{gqa}}\right]
\end{equation}

\subsubsection{Decode Latency: Total Latency}

Summing over $t = 1, \ldots, S_{\mathrm{out}}$:
\begin{equation}
T_{\mathrm{decode}} = \frac{l}{\beta_H} \left[S_{\mathrm{out}} \cdot \xi_W^{\mathrm{dec}} d^2 b_w + \frac{2 d b_{kv}}{\mathrm{gqa}} \sum_{t=1}^{S_{\mathrm{out}}} (S_{\mathrm{in}} + t)\right]
\end{equation}

Evaluating the sum:
\begin{equation}
\sum_{t=1}^{S_{\mathrm{out}}} (S_{\mathrm{in}} + t) = S_{\mathrm{out}} \cdot S_{\mathrm{in}} + \frac{S_{\mathrm{out}}(S_{\mathrm{out}} + 1)}{2} = S_{\mathrm{out}} \cdot \bar{S}
\end{equation}
where the \textbf{average context length} is:
\begin{equation}
\bar{S} = S_{\mathrm{in}} + \frac{S_{\mathrm{out}} + 1}{2}
\label{eq:S_bar}
\end{equation}

\paragraph{Complete Decode Latency.}
\begin{equation}
T_{\mathrm{decode}} = \frac{l \cdot S_{\mathrm{out}}}{\beta_H} \left[\xi_W^{\mathrm{dec}} \cdot d^2 \cdot b_w + \frac{2\bar{S} \cdot d \cdot b_{kv}}{\mathrm{gqa}}\right]
\label{eq:decode_latency_full}
\end{equation}

\subsubsection{Decode Latency: Unified Form}

Define the \textbf{effective decode coefficient}:
\begin{equation}
\xi_W^{\mathrm{eff}}(d, \mathrm{gqa}) = \xi_W^{\mathrm{dec}} + \frac{2\bar{S} \cdot b_{kv}}{\mathrm{gqa} \cdot d \cdot b_w}
\label{eq:xi_W_eff}
\end{equation}

Then:
\begin{equation}
T_{\mathrm{decode}} = \frac{l \cdot S_{\mathrm{out}} \cdot \xi_W^{\mathrm{eff}} \cdot d^2 \cdot b_w}{\beta_H}
\label{eq:decode_latency_unified}
\end{equation}

Expanding $\xi_W^{\mathrm{eff}}$:
\begin{equation}
\xi_W^{\mathrm{eff}} = 2 + \frac{2}{\mathrm{gqa}} + 3r + \frac{2\bar{S} \cdot b_{kv}}{\mathrm{gqa} \cdot d \cdot b_w}
\end{equation}

\textbf{Note}: $\xi_W^{\mathrm{eff}}$ depends on $d$ and $\mathrm{gqa}$, which introduces additional coupling in optimization problems.

\subsection{Memory Footprint}
\label{sec:memory_footprint}

Per-layer storage (all $E$ experts):
\begin{equation}
M_{\mathrm{layer}} = \xi_W^{\mathrm{all}} \cdot d^2 \cdot b_w
\end{equation}

Total model memory:
\begin{equation}
M = l \cdot \xi_W^{\mathrm{all}} \cdot d^2 \cdot b_w
\label{eq:memory_footprint}
\end{equation}

\subsection{Constraint Formulations}
\label{sec:constraints}

\subsubsection{Prefill Constraint}

Given target latency $T_{\mathrm{lat}}^{\mathrm{pre}}$, define $\bar{F}_p = T_{\mathrm{lat}}^{\mathrm{pre}} \cdot \pi_H / (BS_{\mathrm{in}})$:
\begin{equation}
l \cdot \xi_F \cdot d^2 \leq \bar{F}_p
\label{eq:prefill_constraint}
\end{equation}

\subsubsection{Decode Constraint}

Given target latency $T_{\mathrm{lat}}^{\mathrm{dec}}$, define $\bar{M}_d = T_{\mathrm{lat}}^{\mathrm{dec}} \cdot \beta_H / S_{\mathrm{out}}$:
\begin{equation}
l \cdot \xi_W^{\mathrm{eff}} \cdot d^2 \cdot b_w \leq \bar{M}_d
\label{eq:decode_constraint}
\end{equation}

Expanding with the full KV-cache term:
\begin{equation}
l \cdot \xi_W^{\mathrm{dec}} \cdot d^2 \cdot b_w + \frac{2l \cdot \bar{S} \cdot d \cdot b_{kv}}{\mathrm{gqa}} \leq \bar{M}_d
\label{eq:decode_constraint_expanded}
\end{equation}

\subsubsection{Memory Constraint}

\begin{equation}
l \cdot \xi_W^{\mathrm{all}} \cdot d^2 \cdot b_w \leq M_{\mathrm{budget}}
\label{eq:memory_constraint}
\end{equation}

\subsection{Summary}
\label{sec:summary}

\begin{table}[h]
\caption{Roofline coefficients and their usage.}
\label{tab:coefficients}
\centering
\begin{tabular}{c|c|c}
\toprule
\textbf{Coefficient} & \textbf{Formula} & \textbf{Usage} \\
\midrule
$\xi_F$ & $4 + \frac{4}{\mathrm{gqa}} + 6r$ & Prefill FLOPs: $\mathcal{F} = l \cdot \xi_F \cdot BS \cdot d^2$ \\[6pt]
$\xi_W^{\mathrm{dec}}$ & $2 + \frac{2}{\mathrm{gqa}} + 3r$ & Decode weight traffic \\[6pt]
$\xi_W^{\mathrm{eff}}$ & $\xi_W^{\mathrm{dec}} + \frac{2\bar{S} b_{kv}}{\mathrm{gqa} \cdot d \cdot b_w}$ & Decode total traffic (weight + KV) \\[6pt]
$\xi_W^{\mathrm{all}}$ & $2 + \frac{2}{\mathrm{gqa}} + \frac{3r}{\rho}$ & Storage: $M = l \cdot \xi_W^{\mathrm{all}} \cdot d^2 \cdot b_w$ \\
\bottomrule
\end{tabular}
\end{table}

\begin{table}[h]
\caption{Partial derivatives of coefficients.}
\label{tab:derivatives}
\centering
\begin{tabular}{l|cccc}
\toprule
& $\partial/\partial r$ & $\partial/\partial \mathrm{gqa}$ & $\partial/\partial \rho$ & $\partial/\partial d$ \\
\midrule
$\xi_F$ & $6$ & $-\frac{4}{\mathrm{gqa}^2}$ & $0$ & $0$ \\[4pt]
$\xi_W^{\mathrm{dec}}$ & $3$ & $-\frac{2}{\mathrm{gqa}^2}$ & $0$ & $0$ \\[4pt]
$\xi_W^{\mathrm{eff}}$ & $3$ & $-\frac{2}{\mathrm{gqa}^2} - \frac{2\bar{S} b_{kv}}{\mathrm{gqa}^2 d b_w}$ & $0$ & $-\frac{2\bar{S} b_{kv}}{\mathrm{gqa} d^2 b_w}$ \\[4pt]
$\xi_W^{\mathrm{all}}$ & $\frac{3}{\rho}$ & $-\frac{2}{\mathrm{gqa}^2}$ & $-\frac{3r}{\rho^2}$ & $0$ \\
\bottomrule
\end{tabular}
\end{table}

\textbf{Note}: $\xi_W^{\mathrm{eff}}$ depends on $d$, which introduces additional coupling in the optimization. For dense models ($\rho = 1$), we have $\xi_W^{\mathrm{all}} = \xi_W^{\mathrm{dec}}$.

Additionally, we define the \textbf{aggregate loss gradient} $\tilde{D} \triangleq \kappa_\rho \rho^{\alpha_\rho} d^{\beta_2 - \beta_1} + \kappa_d$, which combines the sparsity and base capacity contributions from the loss function. This shorthand appears throughout the following case derivations (Cases D1--D3, P1--P3) in the stationarity conditions for $r$.

\section{Case D1: Decode, Latency-Constrained}
\label{app:D1}

\subsection{Problem Statement}

\begin{equation}
\begin{aligned}
\min_{l, d, r, \mathrm{gqa}, \rho} \quad & \hat{\mathcal{L}}(\theta) = \frac{\kappa_l}{l^{\alpha_l}} + \frac{\kappa_\rho \rho^{\alpha_\rho}}{r^{\alpha_r} d^{\beta_1}} + \frac{\kappa_d}{r^{\alpha_r} d^{\beta_2}} + \frac{\kappa_m \cdot \mathrm{gqa}^{\alpha_m}}{d^{\alpha_m}} + \hat{\mathcal{L}}_\infty \\
\mathrm{s.t.} \quad & l \cdot \xi_W^{\mathrm{dec}} \cdot d^2 \cdot b_w + \frac{2l \cdot \bar{S} \cdot d \cdot b_{kv}}{\mathrm{gqa}} = \bar{M}_d \\
& \rho \geq \rho_{\min}
\end{aligned}
\end{equation}

where $\xi_W^{\mathrm{dec}} = 2 + 2/\mathrm{gqa} + 3r$.

Define the constraint function:
\begin{equation}
g_T(\theta) = l \cdot \xi_W^{\mathrm{dec}} \cdot d^2 \cdot b_w + \frac{2l \cdot \bar{S} \cdot d \cdot b_{kv}}{\mathrm{gqa}} - \bar{M}_d
\end{equation}

\subsection{Lagrangian}

\begin{equation}
\mathscr{L} = \hat{\mathcal{L}}(\theta) + \mu_T \cdot g_T(\theta)
\end{equation}

\subsection{Constraint Partial Derivatives}

\begin{align}
\frac{\partial g_T}{\partial l} &= \xi_W^{\mathrm{dec}} \cdot d^2 \cdot b_w + \frac{2\bar{S} \cdot d \cdot b_{kv}}{\mathrm{gqa}} \\[6pt]
\frac{\partial g_T}{\partial d} &= 2l \cdot \xi_W^{\mathrm{dec}} \cdot d \cdot b_w + \frac{2l \cdot \bar{S} \cdot b_{kv}}{\mathrm{gqa}} \\[6pt]
\frac{\partial g_T}{\partial r} &= 3l \cdot d^2 \cdot b_w \\[6pt]
\frac{\partial g_T}{\partial \mathrm{gqa}} &= -\frac{2l \cdot d^2 \cdot b_w}{\mathrm{gqa}^2} - \frac{2l \cdot \bar{S} \cdot d \cdot b_{kv}}{\mathrm{gqa}^2} \\[6pt]
\frac{\partial g_T}{\partial \rho} &= 0
\end{align}

Note: $\partial g_T / \partial \rho = 0$ still holds because the KV-cache term is also independent of $\rho$.

\subsection{KKT Conditions}

\paragraph{Stationarity for $l$:}
\begin{equation}
-\frac{\alpha_l \kappa_l}{l^{\alpha_l + 1}} + \mu_T \left(\xi_W^{\mathrm{dec}} d^2 b_w + \frac{2\bar{S} d b_{kv}}{\mathrm{gqa}}\right) = 0
\end{equation}

Solving for $\mu_T$:
\begin{equation}
\mu_T = \frac{\alpha_l \kappa_l}{l^{\alpha_l + 1} \left(\xi_W^{\mathrm{dec}} d^2 b_w + \frac{2\bar{S} d b_{kv}}{\mathrm{gqa}}\right)}
\end{equation}

\paragraph{Stationarity for $\rho$:}
\begin{equation}
\frac{\partial \mathscr{L}}{\partial \rho} = \frac{\alpha_\rho \kappa_\rho \rho^{\alpha_\rho - 1}}{r^{\alpha_r} d^{\beta_1}} + \mu_T \cdot 0 = \frac{\alpha_\rho \kappa_\rho \rho^{\alpha_\rho - 1}}{r^{\alpha_r} d^{\beta_1}} > 0
\end{equation}

Since $\partial \mathscr{L}/\partial \rho > 0$ for all $\rho > 0$:
\begin{equation}
\rho^* = \rho_{\min}
\end{equation}

\textbf{Physical interpretation.} The activation rate $\rho$ does not appear in the decode latency constraint ($\partial g_T/\partial \rho = 0$), because only $K$ experts are activated per token regardless of the total pool size $E$---thus per-token computation and bandwidth cost are invariant to $\rho$. Meanwhile, the loss is monotonically increasing in $\rho$: fewer activated experts relative to total experts means greater total model capacity at no additional per-token cost. Therefore, the optimal strategy under latency constraints is to maximize sparsity (minimize $\rho$), i.e., increase the expert pool $E$ as far as memory permits while keeping $K$ fixed.

\paragraph{Stationarity for $r$:}
\begin{equation}
-\frac{\alpha_r \tilde{D}}{r^{\alpha_r + 1} d^{\beta_2}} + 3\mu_T l d^2 b_w = 0
\end{equation}

\paragraph{Stationarity for $\mathrm{gqa}$:}
\begin{equation}
\frac{\alpha_m \kappa_m \cdot \mathrm{gqa}^{\alpha_m - 1}}{d^{\alpha_m}} - \mu_T \left(\frac{2l d^2 b_w}{\mathrm{gqa}^2} + \frac{2l \bar{S} d b_{kv}}{\mathrm{gqa}^2}\right) = 0
\end{equation}

Simplifying:
\begin{equation}
\frac{\alpha_m \kappa_m \cdot \mathrm{gqa}^{\alpha_m - 1}}{d^{\alpha_m}} = \frac{2\mu_T l}{\mathrm{gqa}^2} \left(d^2 b_w + \bar{S} d b_{kv}\right)
\end{equation}

\paragraph{Stationarity for $d$:}
\begin{equation}
-\frac{\beta_1 \kappa_\rho \rho^{\alpha_\rho}}{r^{\alpha_r} d^{\beta_1 + 1}} - \frac{\beta_2 \kappa_d}{r^{\alpha_r} d^{\beta_2 + 1}} - \frac{\alpha_m \kappa_m \mathrm{gqa}^{\alpha_m}}{d^{\alpha_m + 1}} + \mu_T \left(2l \xi_W^{\mathrm{dec}} d b_w + \frac{2l \bar{S} b_{kv}}{\mathrm{gqa}}\right) = 0
\end{equation}

\subsection{Solution Derivation}

\paragraph{Step 1: Depth.}
From the active constraint:
\begin{equation}
l^* = \frac{\bar{M}_d}{\xi_W^{\mathrm{dec}} d^2 b_w + \frac{2\bar{S} d b_{kv}}{\mathrm{gqa}}}
\end{equation}

Or equivalently using $\xi_W^{\mathrm{eff}}$:
\begin{equation}
l^* = \frac{\bar{M}_d}{\xi_W^{\mathrm{eff}} \cdot d^2 \cdot b_w}
\end{equation}

\paragraph{Step 2: Multiplier.}
Define the effective constraint coefficient:
\begin{equation}
\Gamma \triangleq \xi_W^{\mathrm{dec}} d^2 b_w + \frac{2\bar{S} d b_{kv}}{\mathrm{gqa}} = \xi_W^{\mathrm{eff}} \cdot d^2 \cdot b_w
\end{equation}

Then:
\begin{equation}
\mu_T l = \frac{\alpha_l \kappa_l}{l^{\alpha_l} \Gamma} = \frac{\alpha_l \kappa_l \Gamma^{\alpha_l - 1}}{\bar{M}_d^{\alpha_l}}
\end{equation}

\paragraph{Step 3: FFN Ratio.}
From stationarity for $r$:
\begin{equation}
\frac{\alpha_r \tilde{D}}{r^{\alpha_r + 1} d^{\beta_2}} = 3\mu_T l d^2 b_w
\end{equation}

\begin{equation}
r^* = \left[\frac{\alpha_r \tilde{D}}{3\alpha_l \kappa_l} \cdot \frac{\bar{M}_d^{\alpha_l}}{\Gamma^{\alpha_l - 1} d^{2 + \beta_2} b_w}\right]^{\frac{1}{\alpha_r + 1}}
\end{equation}

\paragraph{Step 4: GQA Ratio.}
From stationarity for $\mathrm{gqa}$:
\begin{equation}
\mathrm{gqa}^{\alpha_m + 1} = \frac{2\mu_T l d^{\alpha_m}}{\alpha_m \kappa_m} \left(d^2 b_w + \bar{S} d b_{kv}\right)
\end{equation}

\begin{equation}
\mathrm{gqa}^* = \left[\frac{2\alpha_l \kappa_l}{\alpha_m \kappa_m} \cdot \frac{\Gamma^{\alpha_l - 1} d^{\alpha_m}}{\bar{M}_d^{\alpha_l}} \left(d^2 b_w + \bar{S} d b_{kv}\right)\right]^{\frac{1}{\alpha_m + 1}}
\end{equation}

\subsection{Solution Summary}

\begin{align}
\rho^* &= \rho_{\min} \\[4pt]
l^* &= \frac{\bar{M}_d}{\xi_W^{\mathrm{dec}} d^2 b_w + \frac{2\bar{S} d b_{kv}}{\mathrm{gqa}}} \\[4pt]
r^* &= \left[\frac{\alpha_r \tilde{D}}{3\alpha_l \kappa_l} \cdot \frac{\bar{M}_d^{\alpha_l}}{\Gamma^{\alpha_l - 1} d^{2 + \beta_2} b_w}\right]^{\frac{1}{\alpha_r + 1}} \\[4pt]
\mathrm{gqa}^* &= \left[\frac{2\alpha_l \kappa_l}{\alpha_m \kappa_m} \cdot \frac{\Gamma^{\alpha_l - 1} d^{\alpha_m}}{\bar{M}_d^{\alpha_l}} \left(d^2 b_w + \bar{S} d b_{kv}\right)\right]^{\frac{1}{\alpha_m + 1}}
\end{align}

where $\Gamma = \xi_W^{\mathrm{dec}} d^2 b_w + 2\bar{S} d b_{kv} / \mathrm{gqa}$ and $\xi_W^{\mathrm{dec}} = 2 + 2/\mathrm{gqa}^* + 3r^*$ (implicit). The activation rate result corresponds to \autoref{thm:latency-only} in the main text.


\section{Case D2: Decode, Memory-Constrained}
\label{app:D2}

\subsection{Problem Statement}

\begin{equation}
\begin{aligned}
\min_{l, d, r, \mathrm{gqa}, \rho} \quad & \hat{\mathcal{L}}(\theta) \\
\mathrm{s.t.} \quad & l \cdot \xi_W^{\mathrm{all}} \cdot d^2 \cdot b_w = M_{\mathrm{budget}}
\end{aligned}
\end{equation}

where $\xi_W^{\mathrm{all}} = 2 + 2/\mathrm{gqa} + 3r/\rho$.

Note: The memory constraint concerns model storage, which is independent of the KV-cache runtime overhead.

\subsection{Lagrangian}

\begin{equation}
\mathscr{L} = \hat{\mathcal{L}}(\theta) + \mu_M \left(l \cdot \xi_W^{\mathrm{all}} \cdot d^2 \cdot b_w - M_{\mathrm{budget}}\right)
\end{equation}

\subsection{Constraint Partial Derivatives}

\begin{align}
\frac{\partial g_M}{\partial l} &= \xi_W^{\mathrm{all}} d^2 b_w \\[4pt]
\frac{\partial g_M}{\partial d} &= 2l \xi_W^{\mathrm{all}} d b_w \\[4pt]
\frac{\partial g_M}{\partial r} &= \frac{3l d^2 b_w}{\rho} \\[4pt]
\frac{\partial g_M}{\partial \mathrm{gqa}} &= -\frac{2l d^2 b_w}{\mathrm{gqa}^2} \\[4pt]
\frac{\partial g_M}{\partial \rho} &= -\frac{3lr d^2 b_w}{\rho^2}
\end{align}

\subsection{KKT Conditions}

\paragraph{Stationarity for $l$:}
\begin{equation}
-\frac{\alpha_l \kappa_l}{l^{\alpha_l + 1}} + \mu_M \xi_W^{\mathrm{all}} d^2 b_w = 0
\quad \Rightarrow \quad
\mu_M = \frac{\alpha_l \kappa_l}{l^{\alpha_l + 1} \xi_W^{\mathrm{all}} d^2 b_w}
\end{equation}

\paragraph{Stationarity for $\rho$:}
\begin{equation}
\frac{\alpha_\rho \kappa_\rho \rho^{\alpha_\rho - 1}}{r^{\alpha_r} d^{\beta_1}} - \frac{3\mu_M lr d^2 b_w}{\rho^2} = 0
\end{equation}

Rearranging:
\begin{equation}
\mu_M l = \frac{\alpha_\rho \kappa_\rho \rho^{\alpha_\rho + 1}}{3r^{\alpha_r + 1} d^{\beta_1 + 2} b_w}
\label{eq:D2_muM_l_rho}
\end{equation}

\paragraph{Stationarity for $r$:}
\begin{equation}
-\frac{\alpha_r \tilde{D}}{r^{\alpha_r + 1} d^{\beta_2}} + \frac{3\mu_M l d^2 b_w}{\rho} = 0
\end{equation}

Rearranging:
\begin{equation}
\mu_M l = \frac{\alpha_r \tilde{D} \rho}{3 r^{\alpha_r + 1} d^{\beta_2 + 2} b_w}
\label{eq:D2_muM_l_r}
\end{equation}

\paragraph{Stationarity for $\mathrm{gqa}$:}
\begin{equation}
\frac{\alpha_m \kappa_m \cdot \mathrm{gqa}^{\alpha_m - 1}}{d^{\alpha_m}} - \frac{2\mu_M l d^2 b_w}{\mathrm{gqa}^2} = 0
\end{equation}

\subsection{Key Derivation: Activation Rate}

\autoref{eq:D2_muM_l_rho} and~\autoref{eq:D2_muM_l_r}:
\begin{equation}
\frac{\alpha_\rho \kappa_\rho \rho^{\alpha_\rho + 1}}{3r^{\alpha_r + 1} d^{\beta_1 + 2} b_w} = \frac{\alpha_r \tilde{D} \rho}{3 r^{\alpha_r + 1} d^{\beta_2 + 2} b_w}
\end{equation}

Canceling common factors:
\begin{equation}
\frac{\alpha_\rho \kappa_\rho \rho^{\alpha_\rho}}{d^{\beta_1}} = \frac{\alpha_r \tilde{D}}{d^{\beta_2}}
\end{equation}

Substituting $\tilde{D} = \kappa_\rho \rho^{\alpha_\rho} d^{\beta_2 - \beta_1} + \kappa_d$:
\begin{equation}
\alpha_\rho \kappa_\rho \rho^{\alpha_\rho} d^{\beta_2 - \beta_1} = \alpha_r \kappa_\rho \rho^{\alpha_\rho} d^{\beta_2 - \beta_1} + \alpha_r \kappa_d
\end{equation}

Collecting terms:
\begin{equation}
(\alpha_\rho - \alpha_r) \kappa_\rho \rho^{\alpha_\rho} d^{\beta_2 - \beta_1} = \alpha_r \kappa_d
\end{equation}

Solving:
\begin{equation}
\rho^* = \left[\frac{\alpha_r \kappa_d}{(\alpha_\rho - \alpha_r) \kappa_\rho}\right]^{1/\alpha_\rho} d^{(\beta_1 - \beta_2)/\alpha_\rho}
\end{equation}

Validity requires $\alpha_\rho > \alpha_r$.

\subsection{Remaining Solutions}

\paragraph{Depth.}
\begin{equation}
l^* = \frac{M_{\mathrm{budget}}}{\xi_W^{\mathrm{all}} d^2 b_w}
\end{equation}

\paragraph{Multiplier.}
\begin{equation}
\mu_M l = \frac{\alpha_l \kappa_l (\xi_W^{\mathrm{all}})^{\alpha_l - 1} d^{2(\alpha_l - 1)} b_w^{\alpha_l - 1}}{M_{\mathrm{budget}}^{\alpha_l}}
\end{equation}

\paragraph{FFN Ratio.}
\begin{equation}
r^* = \left[\frac{\alpha_r \tilde{D} \rho^*}{3\alpha_l \kappa_l} \cdot \frac{M_{\mathrm{budget}}^{\alpha_l}}{(\xi_W^{\mathrm{all}})^{\alpha_l - 1} d^{2\alpha_l + \beta_2} b_w^{\alpha_l}}\right]^{\frac{1}{\alpha_r + 1}}
\end{equation}

\paragraph{GQA Ratio.}
\begin{equation}
\mathrm{gqa}^* = \left[\frac{2\alpha_l \kappa_l}{\alpha_m \kappa_m} \cdot \frac{(\xi_W^{\mathrm{all}})^{\alpha_l - 1} d^{2\alpha_l + \alpha_m} b_w^{\alpha_l}}{M_{\mathrm{budget}}^{\alpha_l}}\right]^{\frac{1}{\alpha_m + 1}}
\end{equation}

\subsection{Solution Summary}

\begin{align}
\rho^* &= \left[\frac{\alpha_r \kappa_d}{(\alpha_\rho - \alpha_r) \kappa_\rho}\right]^{1/\alpha_\rho} d^{(\beta_1 - \beta_2)/\alpha_\rho} \\[4pt]
l^* &= \frac{M_{\mathrm{budget}}}{\xi_W^{\mathrm{all}} d^2 b_w} \\[4pt]
r^* &= \left[\frac{\alpha_r \tilde{D} \rho^*}{3\alpha_l \kappa_l} \cdot \frac{M_{\mathrm{budget}}^{\alpha_l}}{(\xi_W^{\mathrm{all}})^{\alpha_l - 1} d^{2\alpha_l + \beta_2} b_w^{\alpha_l}}\right]^{\frac{1}{\alpha_r + 1}} \\[4pt]
\mathrm{gqa}^* &= \left[\frac{2\alpha_l \kappa_l}{\alpha_m \kappa_m} \cdot \frac{(\xi_W^{\mathrm{all}})^{\alpha_l - 1} d^{2\alpha_l + \alpha_m} b_w^{\alpha_l}}{M_{\mathrm{budget}}^{\alpha_l}}\right]^{\frac{1}{\alpha_m + 1}}
\end{align}

where $\xi_W^{\mathrm{all}} = 2 + 2/\mathrm{gqa}^* + 3r^*/\rho^*$ (implicit). The activation rate result corresponds to \autoref{thm:memory-only} in the main text.


\section{Case D3: Decode, Dual-Constrained}
\label{app:D3}

\subsection{Problem Statement}

\begin{equation}
\begin{aligned}
\min_{l, d, r, \mathrm{gqa}, \rho} \quad & \hat{\mathcal{L}}(\theta) \\
\mathrm{s.t.} \quad & l \cdot \xi_W^{\mathrm{dec}} \cdot d^2 \cdot b_w + \frac{2l \cdot \bar{S} \cdot d \cdot b_{kv}}{\mathrm{gqa}} = \bar{M}_d \\
& l \cdot \xi_W^{\mathrm{all}} \cdot d^2 \cdot b_w = M_{\mathrm{budget}}
\end{aligned}
\end{equation}

\subsection{Constraint Compatibility}

Define:
\begin{equation}
\Gamma \triangleq \xi_W^{\mathrm{dec}} \cdot d^2 \cdot b_w + \frac{2\bar{S} \cdot d \cdot b_{kv}}{\mathrm{gqa}}
\end{equation}

From the two constraints:
\begin{equation}
l \cdot \Gamma = \bar{M}_d, \quad l \cdot \xi_W^{\mathrm{all}} \cdot d^2 \cdot b_w = M_{\mathrm{budget}}
\end{equation}

Dividing:
\begin{equation}
\frac{\Gamma}{\xi_W^{\mathrm{all}} \cdot d^2 \cdot b_w} = \frac{\bar{M}_d}{M_{\mathrm{budget}}} \triangleq \eta
\end{equation}

Expanding:
\begin{equation}
\frac{\xi_W^{\mathrm{dec}} \cdot d^2 \cdot b_w + \frac{2\bar{S} \cdot d \cdot b_{kv}}{\mathrm{gqa}}}{\xi_W^{\mathrm{all}} \cdot d^2 \cdot b_w} = \eta
\end{equation}

\begin{equation}
\frac{\xi_W^{\mathrm{dec}}}{\xi_W^{\mathrm{all}}} + \frac{2\bar{S} \cdot b_{kv}}{\xi_W^{\mathrm{all}} \cdot \mathrm{gqa} \cdot d \cdot b_w} = \eta
\label{eq:D3_compatibility}
\end{equation}

\subsection{Derivation of Activation Rate}

Substituting $\xi_W^{\mathrm{dec}} = \alpha_{\mathrm{attn}} + 3r$ and $\xi_W^{\mathrm{all}} = \alpha_{\mathrm{attn}} + 3r/\rho$ where $\alpha_{\mathrm{attn}} = 2 + 2/\mathrm{gqa}$:

\begin{equation}
\frac{\alpha_{\mathrm{attn}} + 3r}{\alpha_{\mathrm{attn}} + \frac{3r}{\rho}} + \frac{2\bar{S} \cdot b_{kv}}{\left(\alpha_{\mathrm{attn}} + \frac{3r}{\rho}\right) \mathrm{gqa} \cdot d \cdot b_w} = \eta
\end{equation}

Define the KV-cache correction term:
\begin{equation}
\delta \triangleq \frac{2\bar{S} \cdot b_{kv}}{\mathrm{gqa} \cdot d \cdot b_w}
\end{equation}

Then:
\begin{equation}
\frac{\alpha_{\mathrm{attn}} + 3r + \frac{\delta}{\xi_W^{\mathrm{all}}}}{\xi_W^{\mathrm{all}}} = \eta
\end{equation}

This gives:
\begin{equation}
\alpha_{\mathrm{attn}} + 3r + \frac{\delta}{\xi_W^{\mathrm{all}}} = \eta \xi_W^{\mathrm{all}} = \eta \left(\alpha_{\mathrm{attn}} + \frac{3r}{\rho}\right)
\end{equation}

Rearranging:
\begin{equation}
\alpha_{\mathrm{attn}}(1 - \eta) + 3r + \frac{\delta}{\alpha_{\mathrm{attn}} + \frac{3r}{\rho}} = \frac{3\eta r}{\rho}
\end{equation}

This is an implicit equation for $\rho$ due to the presence of $\rho$ in the $\delta$ term's denominator.

\paragraph{Simplified Form.}
Multiplying through by $\xi_W^{\mathrm{all}} = \alpha_{\mathrm{attn}} + 3r/\rho$:
\begin{equation}
(\alpha_{\mathrm{attn}} + 3r)\xi_W^{\mathrm{all}} + \delta = \eta (\xi_W^{\mathrm{all}})^2
\end{equation}

Let $x = \xi_W^{\mathrm{all}}$:
\begin{equation}
(\alpha_{\mathrm{attn}} + 3r)x + \delta = \eta x^2
\end{equation}

\begin{equation}
\eta x^2 - (\alpha_{\mathrm{attn}} + 3r)x - \delta = 0
\end{equation}

Solving the quadratic:
\begin{equation}
\xi_W^{\mathrm{all}} = \frac{(\alpha_{\mathrm{attn}} + 3r) + \sqrt{(\alpha_{\mathrm{attn}} + 3r)^2 + 4\eta\delta}}{2\eta}
\end{equation}

From $\xi_W^{\mathrm{all}} = \alpha_{\mathrm{attn}} + 3r/\rho$:
\begin{equation}
\frac{3r}{\rho} = \xi_W^{\mathrm{all}} - \alpha_{\mathrm{attn}}
\end{equation}

\begin{equation}
\rho^* = \frac{3r}{\xi_W^{\mathrm{all}} - \alpha_{\mathrm{attn}}} = \frac{6\eta r}{(\alpha_{\mathrm{attn}} + 3r) - 2\eta\alpha_{\mathrm{attn}} + \sqrt{(\alpha_{\mathrm{attn}} + 3r)^2 + 4\eta\delta}}
\end{equation}

where $\delta = 2\bar{S} b_{kv} / (\mathrm{gqa} \cdot d \cdot b_w)$ and $\alpha_{\mathrm{attn}} = 2 + 2/\mathrm{gqa}$.

\subsection{Special Case: $\delta \to 0$}

When the KV-cache term is negligible ($\delta \to 0$):
\begin{equation}
\xi_W^{\mathrm{all}} \to \frac{\alpha_{\mathrm{attn}} + 3r}{\eta}
\end{equation}

\begin{equation}
\rho^* \to \frac{3\eta r}{\alpha_{\mathrm{attn}} + 3r - \eta\alpha_{\mathrm{attn}}} = \frac{3\eta r}{\alpha_{\mathrm{attn}}(1-\eta) + 3r}
\end{equation}

This recovers the simplified formula from the weight-dominated approximation.

\subsection{Remaining Solutions}

\paragraph{Depth.}
\begin{equation}
l^* = \frac{M_{\mathrm{budget}}}{\xi_W^{\mathrm{all}} d^2 b_w}
\end{equation}

\paragraph{Other Variables.}
Unlike the single-constraint cases (D1, D2, P1), where a single Lagrange multiplier allows sequential elimination, the dual-constrained case involves two active constraints with multipliers $\mu_T$ and $\mu_M$. The stationarity conditions for $r$ and $\mathrm{gqa}$ each depend on both multipliers, and the KV-cache term in the decode constraint further couples $\mathrm{gqa}$ to the latency budget. As a result, $r^*$, $\mathrm{gqa}^*$, and $d^*$ do not admit independent closed-form expressions and must be obtained by numerically solving the coupled KKT system. In practice, $\rho^*$ and $l^*$ from the equations above are substituted first, reducing the system to three unknowns.

\subsection{Solution Summary}

\begin{align}
\eta &= \bar{M}_d / M_{\mathrm{budget}} \\[4pt]
\delta &= \frac{2\bar{S} b_{kv}}{\mathrm{gqa} \cdot d \cdot b_w} \\[4pt]
\xi_W^{\mathrm{all}} &= \frac{(\alpha_{\mathrm{attn}} + 3r) + \sqrt{(\alpha_{\mathrm{attn}} + 3r)^2 + 4\eta\delta}}{2\eta} \\[4pt]
\rho^* &= \frac{3r}{\xi_W^{\mathrm{all}} - \alpha_{\mathrm{attn}}} \\[4pt]
l^* &= \frac{M_{\mathrm{budget}}}{\xi_W^{\mathrm{all}} d^2 b_w}
\end{align}

where $\alpha_{\mathrm{attn}} = 2 + 2/\mathrm{gqa}^*$, and $r^*$, $\mathrm{gqa}^*$ are coupled solutions. The activation rate result corresponds to \autoref{thm:dual-constrained}(b) in the main text.


\section{Case P1: Prefill, Latency-Constrained}
\label{app:P1}

\subsection{Problem Statement}

\begin{equation}
\begin{aligned}
\min_{l, d, r, \mathrm{gqa}, \rho} \quad & \hat{\mathcal{L}}(\theta) \\
\mathrm{s.t.} \quad & l \cdot \xi_F \cdot d^2 = \bar{F}_p \\
& \rho \geq \rho_{\min}
\end{aligned}
\end{equation}

where $\xi_F = 4 + 4/\mathrm{gqa} + 6r$.

Note: The prefill constraint is compute-bound and does not involve KV-cache traffic.

\subsection{Lagrangian}

\begin{equation}
\mathscr{L} = \hat{\mathcal{L}}(\theta) + \mu_T \left(l \cdot \xi_F \cdot d^2 - \bar{F}_p\right)
\end{equation}

\subsection{Constraint Partial Derivatives}

\begin{align}
\frac{\partial g_T}{\partial l} &= \xi_F d^2 \\[4pt]
\frac{\partial g_T}{\partial d} &= 2l \xi_F d \\[4pt]
\frac{\partial g_T}{\partial r} &= 6l d^2 \\[4pt]
\frac{\partial g_T}{\partial \mathrm{gqa}} &= -\frac{4l d^2}{\mathrm{gqa}^2} \\[4pt]
\frac{\partial g_T}{\partial \rho} &= 0
\end{align}

\subsection{KKT Conditions}

\paragraph{Stationarity for $l$:}
\begin{equation}
-\frac{\alpha_l \kappa_l}{l^{\alpha_l + 1}} + \mu_T \xi_F d^2 = 0
\quad \Rightarrow \quad
\mu_T = \frac{\alpha_l \kappa_l}{l^{\alpha_l + 1} \xi_F d^2}
\end{equation}

\paragraph{Stationarity for $\rho$:}
\begin{equation}
\frac{\partial \mathscr{L}}{\partial \rho} = \frac{\alpha_\rho \kappa_\rho \rho^{\alpha_\rho - 1}}{r^{\alpha_r} d^{\beta_1}} > 0
\end{equation}

Since $\partial g_T / \partial \rho = 0$ and $\partial \hat{\mathcal{L}}/\partial \rho > 0$:
\begin{equation}
\rho^* = \rho_{\min}
\end{equation}

\paragraph{Stationarity for $r$:}
\begin{equation}
-\frac{\alpha_r \tilde{D}}{r^{\alpha_r + 1} d^{\beta_2}} + 6\mu_T l d^2 = 0
\end{equation}

Note: coefficient is $6$ (vs $3$ in D1) from $\partial \xi_F / \partial r = 6$. The $b_w$ factor is absent because the prefill constraint operates in FLOPs rather than bytes, yielding $\partial g_T/\partial r = 6ld^2$ without the byte-width scaling present in the decode case ($\partial g_T/\partial r = 3ld^2 b_w$).

\paragraph{Stationarity for $\mathrm{gqa}$:}
\begin{equation}
\frac{\alpha_m \kappa_m \cdot \mathrm{gqa}^{\alpha_m - 1}}{d^{\alpha_m}} - \frac{4\mu_T l d^2}{\mathrm{gqa}^2} = 0
\end{equation}

Note: coefficient is $4$ (vs $2$ in D1) from $|\partial \xi_F / \partial \mathrm{gqa}| = 4/\mathrm{gqa}^2$.

\subsection{Solution Derivation}

\paragraph{Depth.}
\begin{equation}
l^* = \frac{\bar{F}_p}{\xi_F d^2}
\end{equation}

\paragraph{Multiplier.}
\begin{equation}
\mu_T l = \frac{\alpha_l \kappa_l \xi_F^{\alpha_l - 1} d^{2(\alpha_l - 1)}}{\bar{F}_p^{\alpha_l}}
\end{equation}

\paragraph{FFN Ratio.}
\begin{equation}
r^* = \left[\frac{\alpha_r \tilde{D}}{6\alpha_l \kappa_l} \cdot \frac{\bar{F}_p^{\alpha_l}}{\xi_F^{\alpha_l - 1} d^{2\alpha_l + \beta_2}}\right]^{\frac{1}{\alpha_r + 1}}
\end{equation}

\paragraph{GQA Ratio.}
\begin{equation}
\mathrm{gqa}^* = \left[\frac{4\alpha_l \kappa_l}{\alpha_m \kappa_m} \cdot \frac{\xi_F^{\alpha_l - 1} d^{2\alpha_l + \alpha_m}}{\bar{F}_p^{\alpha_l}}\right]^{\frac{1}{\alpha_m + 1}}
\end{equation}

\subsection{Comparison with Case D1}

\begin{table}[h]
\centering
\caption{Comparison of prefill (P1) and decode (D1) phase optimal hyperparameters. Decode phase requires larger expansion ratio ($r^*$ coefficient doubles) and smaller GQA groups ($\text{gqa}^*$ coefficient halves) due to memory-bound constraints.}
\label{tab:p1_d1_comparison}
\begin{tabular}{lcc}
\toprule
& \textbf{P1 (Prefill)} & \textbf{D1 (Decode)} \\
\midrule
$r^*$ coefficient & $1/6$ & $1/3$ \\
$\text{gqa}^*$ coefficient & $4$ & $2$ \\
Constraint & $\xi_F$ & $\xi_W^{\text{dec}} + \text{KV term}$ \\
\bottomrule
\end{tabular}
\end{table}

\autoref{tab:p1_d1_comparison} reveals how optimal architectural choices differ between prefill and decode phases. In prefill (P1), the compute-bound regime ($\xi_F$ constraint) favors smaller expansion ratios and larger GQA groups. In decode (D1), memory bandwidth constraints ($\xi_W^{\text{dec}}$ plus KV-cache access) necessitate doubling the expansion ratio coefficient (from $1/6$ to $1/3$) while halving the GQA coefficient (from $4$ to $2$), reflecting the need to balance computation against memory access costs.

\subsection{Solution Summary}

\begin{align}
\rho^* &= \rho_{\min} \\[4pt]
l^* &= \frac{\bar{F}_p}{\xi_F d^2} \\[4pt]
r^* &= \left[\frac{\alpha_r \tilde{D}}{6\alpha_l \kappa_l} \cdot \frac{\bar{F}_p^{\alpha_l}}{\xi_F^{\alpha_l - 1} d^{2\alpha_l + \beta_2}}\right]^{\frac{1}{\alpha_r + 1}} \\[4pt]
\mathrm{gqa}^* &= \left[\frac{4\alpha_l \kappa_l}{\alpha_m \kappa_m} \cdot \frac{\xi_F^{\alpha_l - 1} d^{2\alpha_l + \alpha_m}}{\bar{F}_p^{\alpha_l}}\right]^{\frac{1}{\alpha_m + 1}}
\end{align}

where $\xi_F = 4 + 4/\mathrm{gqa}^* + 6r^*$ (implicit). The activation rate result corresponds to \autoref{thm:latency-only} in the main text.


\section{Case P2: Prefill, Memory-Constrained}
\label{app:P2}

\subsection{Problem Statement}

\begin{equation}
\begin{aligned}
\min_{l, d, r, \mathrm{gqa}, \rho} \quad & \hat{\mathcal{L}}(\theta) \\
\mathrm{s.t.} \quad & l \cdot \xi_W^{\mathrm{all}} \cdot d^2 \cdot b_w = M_{\mathrm{budget}}
\end{aligned}
\end{equation}

where $\xi_W^{\mathrm{all}} = 2 + 2/\mathrm{gqa} + 3r/\rho$.

\subsection{Equivalence to Case D2}

The Lagrangian is:
\begin{equation}
\mathscr{L} = \hat{\mathcal{L}}(\theta) + \mu_M \left(l \cdot \xi_W^{\mathrm{all}} \cdot d^2 \cdot b_w - M_{\mathrm{budget}}\right)
\end{equation}

This is identical to Case D2 because:
\begin{enumerate}
    \item The loss function $\hat{\mathcal{L}}(\theta)$ is independent of scenario
    \item The memory constraint concerns model storage, not runtime behavior
    \item $\rho$ appears only in $\xi_W^{\mathrm{all}}$
\end{enumerate}

All KKT conditions and solutions are identical to Case D2.

\subsection{Solution Summary}

\begin{align}
\rho^* &= \left[\frac{\alpha_r \kappa_d}{(\alpha_\rho - \alpha_r) \kappa_\rho}\right]^{1/\alpha_\rho} d^{(\beta_1 - \beta_2)/\alpha_\rho} \\[4pt]
l^* &= \frac{M_{\mathrm{budget}}}{\xi_W^{\mathrm{all}} d^2 b_w} \\[4pt]
r^* &= \left[\frac{\alpha_r \tilde{D} \rho^*}{3\alpha_l \kappa_l} \cdot \frac{M_{\mathrm{budget}}^{\alpha_l}}{(\xi_W^{\mathrm{all}})^{\alpha_l - 1} d^{2\alpha_l + \beta_2} b_w^{\alpha_l}}\right]^{\frac{1}{\alpha_r + 1}} \\[4pt]
\mathrm{gqa}^* &= \left[\frac{2\alpha_l \kappa_l}{\alpha_m \kappa_m} \cdot \frac{(\xi_W^{\mathrm{all}})^{\alpha_l - 1} d^{2\alpha_l + \alpha_m} b_w^{\alpha_l}}{M_{\mathrm{budget}}^{\alpha_l}}\right]^{\frac{1}{\alpha_m + 1}}
\end{align}

All solutions identical to Case D2. The activation rate result corresponds to \autoref{thm:memory-only} in the main text.


\section{Case P3: Prefill, Dual-Constrained}
\label{app:P3}

\subsection{Problem Statement}

\begin{equation}
\begin{aligned}
\min_{l, d, r, \mathrm{gqa}, \rho} \quad & \hat{\mathcal{L}}(\theta) \\
\mathrm{s.t.} \quad & l \cdot \xi_F \cdot d^2 = \bar{F}_p \\
& l \cdot \xi_W^{\mathrm{all}} \cdot d^2 \cdot b_w = M_{\mathrm{budget}}
\end{aligned}
\end{equation}

Note: The prefill constraint is compute-bound and does not involve KV-cache.

\subsection{Constraint Compatibility}

Dividing the two constraints:
\begin{equation}
\frac{\xi_F}{b_w \cdot \xi_W^{\mathrm{all}}} = \frac{\bar{F}_p}{M_{\mathrm{budget}}} \triangleq \eta_p
\end{equation}

\subsection{Derivation of Activation Rate}

Using $\xi_F = 2(\alpha_{\mathrm{attn}} + 3r)$ and $\xi_W^{\mathrm{all}} = \alpha_{\mathrm{attn}} + 3r/\rho$:
\begin{equation}
\frac{2(\alpha_{\mathrm{attn}} + 3r)}{b_w(\alpha_{\mathrm{attn}} + \frac{3r}{\rho})} = \eta_p
\end{equation}

Cross-multiplying:
\begin{equation}
2\alpha_{\mathrm{attn}} + 6r = \eta_p b_w \alpha_{\mathrm{attn}} + \frac{3\eta_p b_w r}{\rho}
\end{equation}

Rearranging:
\begin{equation}
\alpha_{\mathrm{attn}}(2 - \eta_p b_w) + 6r = \frac{3\eta_p b_w r}{\rho}
\end{equation}

Solving:
\begin{equation}
\rho^* = \frac{3\eta_p b_w r}{\alpha_{\mathrm{attn}}(2 - \eta_p b_w) + 6r}
\end{equation}

where $\eta_p = \bar{F}_p / M_{\mathrm{budget}}$ and $\alpha_{\mathrm{attn}} = 2 + 2/\mathrm{gqa}$.

Validity requires $\eta_p b_w < 2$. This result corresponds to \autoref{thm:dual-constrained}(a) in the main text.

\subsection{Comparison with Case D3}

\begin{table}[h]
\centering
\caption{Comparison of prefill (P3) and decode (D3) phase characteristics. Decode includes KV-cache memory access costs and requires solving a quadratic equation for optimal depth-width ratio $\rho^*$.}
\label{tab:prefill_decode_comparison}
\begin{tabular}{l|cc}
\toprule
& \textbf{P3 (Prefill)} & \textbf{D3 (Decode)} \\
\midrule
$\eta$ definition & $\bar{F}_p / M_{\mathrm{budget}}$ & $\bar{M}_d / M_{\mathrm{budget}}$ \\[4pt]
Latency constraint & $l \xi_F d^2$ & $l(\xi_W^{\mathrm{dec}} d^2 b_w + \frac{2\bar{S} d b_{kv}}{\mathrm{gqa}})$ \\[4pt]
KV-cache term & absent & present \\[4pt]
$\rho^*$ formula & closed-form & involves quadratic \\
\bottomrule
\end{tabular}
\end{table}

As shown in~\autoref{tab:prefill_decode_comparison}, the key difference between prefill and decode phases lies in the memory access patterns and their impact on the optimal architecture. In prefill (P3), the latency constraint depends only on compute ($\xi_F d^2$) and admits a closed-form solution for $\rho^*$. In decode (D3), the latency constraint includes both weight loading and KV-cache access, with the KV-cache term $\frac{2Sdb_{kv}}{\text{gqa}}$ becoming dominant for large sequence lengths. This additional complexity requires solving a quadratic equation to find the optimal depth-width ratio.

\subsection{Remaining Solutions}

\paragraph{Depth.}
\begin{equation}
l^* = \frac{\bar{F}_p}{\xi_F d^2}
\end{equation}

\paragraph{Other Variables.}
The solutions for $r^*$, $\mathrm{gqa}^*$, $d^*$ require solving the coupled KKT system.

\subsection{Solution Summary}

\begin{align}
\eta_p &= \bar{F}_p / M_{\mathrm{budget}}, \quad \eta_p b_w < 2 \\[4pt]
\rho^* &= \frac{3\eta_p b_w r^*}{\alpha_{\mathrm{attn}}(2 - \eta_p b_w) + 6r^*} \\[4pt]
l^* &= \frac{\bar{F}_p}{\xi_F d^2}
\end{align}

where $\alpha_{\mathrm{attn}} = 2 + 2/\mathrm{gqa}^*$, and $r^*$, $\mathrm{gqa}^*$ are coupled solutions. The activation rate result corresponds to \autoref{thm:dual-constrained}(a) in the main text.


\section{Summary}
\label{app:summary}

\subsection{Constraint Forms}

The constraint forms are shown in~\autoref{tab:constraint_forms}.

\begin{table}[h]
\caption{Constraint formulations.}
\centering
\begin{tabular}{l|c}
\toprule
\textbf{Constraint} & \textbf{Formula} \\
\midrule
Prefill latency & $l \cdot \xi_F \cdot d^2 \leq \bar{F}_p$ \\[6pt]
Decode latency & $l \cdot \xi_W^{\mathrm{dec}} \cdot d^2 \cdot b_w + \frac{2l \cdot \bar{S} \cdot d \cdot b_{kv}}{\mathrm{gqa}} \leq \bar{M}_d$ \\[6pt]
Memory & $l \cdot \xi_W^{\mathrm{all}} \cdot d^2 \cdot b_w \leq M_{\mathrm{budget}}$ \\
\bottomrule
\end{tabular}
\label{tab:constraint_forms}
\end{table}

\subsection{Solution Matrix: Activation Rate $\rho^*$}

\begin{table}[h]
\centering
\caption{Solution Matrix: Activation Rate $\rho^*$.}
\begin{tabular}{l|c|c|c}
\toprule
& \textbf{Latency} & \textbf{Memory} & \textbf{Dual} \\
\midrule
\textbf{Decode} & $\rho_{\min}$ & $\left[\frac{\alpha_r \kappa_d}{(\alpha_\rho - \alpha_r) \kappa_\rho}\right]^{1/\alpha_\rho} d^{\frac{\beta_1 - \beta_2}{\alpha_\rho}}$ & quadratic form \\[8pt]
\textbf{Prefill} & $\rho_{\min}$ & same as Decode & $\frac{3\eta_p b_w r}{\alpha_{\mathrm{attn}}(2-\eta_p b_w)+6r}$ \\
\bottomrule
\end{tabular}
\label{tab:solution_matrix_rho}
\end{table}

The activation rate $\rho^*$ in solution matrix are shown in~\autoref{tab:solution_matrix_rho}.

\subsection{Solution Matrix: Depth $l^*$}

The Depth $l^*$ in solution matrix are shown in~\autoref{tab:solution_matrix_depth}.

\begin{table}[h]
\centering
\caption{Solution Matrix: Depth $l^*$.}
\begin{tabular}{l|c|c}
\toprule
& \textbf{Latency / Dual} & \textbf{Memory} \\
\midrule
\textbf{Decode} & $\frac{\bar{M}_d}{\xi_W^{\mathrm{dec}} d^2 b_w + \frac{2\bar{S} d b_{kv}}{\mathrm{gqa}}}$ & $\frac{M_{\mathrm{budget}}}{\xi_W^{\mathrm{all}} d^2 b_w}$ \\[10pt]
\textbf{Prefill} & $\frac{\bar{F}_p}{\xi_F d^2}$ & $\frac{M_{\mathrm{budget}}}{\xi_W^{\mathrm{all}} d^2 b_w}$ \\
\bottomrule
\end{tabular}
\label{tab:solution_matrix_depth}
\end{table}

\subsection{Coefficient Comparison}

The coefficient comparison are shown in~\autoref{tab:coefficient_comparison}.

\begin{table}[H]
\caption{Coefficient Comparison.}
\centering
\begin{tabular}{l|cc|cc}
\toprule
& \multicolumn{2}{c|}{$r^*$ coefficient} & \multicolumn{2}{c}{$\mathrm{gqa}^*$ coefficient} \\
& Prefill & Decode & Prefill & Decode \\
\midrule
Latency & $1/6$ & $1/3$ & $4$ & $2$ \\
Memory & $1/3$ & $1/3$ & $2$ & $2$ \\
\bottomrule
\end{tabular}
\label{tab:coefficient_comparison}
\end{table}

\subsection{Key Results}

\paragraph{1. Width-Sparsity Scaling.}
Under memory constraint:
\begin{equation}
\rho^* = \left[\frac{\alpha_r \kappa_d}{(\alpha_\rho - \alpha_r) \kappa_\rho}\right]^{1/\alpha_\rho} d^{(\beta_1 - \beta_2)/\alpha_\rho}
\end{equation}

\paragraph{2. Scenario Independence.}
Memory-constrained $\rho^*$ is identical for Prefill and Decode.

\paragraph{3. Prefill-Decode Asymmetry.}
Latency-constrained coefficients differ since $\partial \xi_F/\partial r = 6$, comparing to $\partial \xi_W^{\mathrm{dec}}/\partial r = 3$.

\paragraph{4. KV-Cache Effect.}
The decode latency constraint includes the KV-cache term $2l\bar{S}db_{kv}/\mathrm{gqa}$, which:
\begin{itemize}
    \item Increases effective constraint tightness
    \item Couples $\mathrm{gqa}$ more strongly to the constraint
    \item Leads to quadratic form in D3 dual-constrained case
\end{itemize}

\subsection{Notation}

\begin{table}[h]
\centering
\caption{Notation Extension for Appendices.}
\begin{tabular}{cl|cl}
\toprule
$\xi_F$ & $4 + 4/\mathrm{gqa} + 6r$ & $\xi_W^{\mathrm{dec}}$ & $2 + 2/\mathrm{gqa} + 3r$ \\
$\xi_W^{\mathrm{all}}$ & $2 + 2/\mathrm{gqa} + 3r/\rho$ & $\tilde{D}$ & $\kappa_\rho \rho^{\alpha_\rho} d^{\beta_2-\beta_1} + \kappa_d$ \\
$\alpha_{\mathrm{attn}}$ & $2 + 2/\mathrm{gqa}$ & $\bar{S}$ & average context length \\
\midrule
$\bar{F}_p$ & $T_{\mathrm{lat}} \pi_H / (BS_{\mathrm{in}})$ & $\bar{M}_d$ & $T_{\mathrm{lat}} \beta_H / S_{\mathrm{out}}$ \\
$\eta$ & $\bar{M}_d / M_{\mathrm{budget}}$ & $\eta_p$ & $\bar{F}_p / M_{\mathrm{budget}}$ \\
$\Gamma$ & \multicolumn{3}{l}{$\xi_W^{\mathrm{dec}} d^2 b_w + 2\bar{S} d b_{kv}/\mathrm{gqa}$} \\
\bottomrule
\end{tabular}
\label{tab:ext_notation}
\end{table}

Here we present the notation~\autoref{tab:ext_notation} used in the Appendices, as an extension to~\autoref{tab:notation}.

\end{document}